\documentclass[11pt]{article}

\usepackage[final]{acl}

\usepackage{amsmath,amsfonts,bm}

\def\eqref#1{equation~\ref{#1}}
\def\1{\bm{1}}

\DeclareMathAlphabet{\mathsfit}{\encodingdefault}{\sfdefault}{m}{sl}
\SetMathAlphabet{\mathsfit}{bold}{\encodingdefault}{\sfdefault}{bx}{n}

\usepackage{times}
\usepackage{latexsym}
\usepackage{cuted}
\usepackage{caption}
\usepackage{wrapfig}
\usepackage{algorithm}
\usepackage{algorithmicx}
\usepackage{algpseudocode}
\usepackage{colortbl}

\usepackage[most]{tcolorbox}
\usepackage{mdframed}
\usepackage{fbox}
\usepackage{multirow}
\usepackage{booktabs}
\usepackage{enumitem}
\usepackage{graphicx}
\usepackage{pgfplots}
\pgfplotsset{compat=1.18}
\usepackage{tikz}
\usepackage{subcaption}
\usepackage[RGB]{xcolor}
\usepackage{amsthm}
\newtheorem{theorem}{Theorem} 
\newtheorem{assumption}{Assumption}

\usepackage{array}
\usepackage{listings}

\definecolor{col}{HTML}{598BE7}
\definecolor{col2}{HTML}{F54254}
\definecolor{col3}{RGB}{66, 244, 133}
\definecolor{RED}{RGB}{241, 79, 33}
\definecolor{GREEN}{RGB}{126, 185, 0}
\definecolor{BLUE}{RGB}{0, 163, 238}
\definecolor{YELLOW}{RGB}{254, 184, 0}
\definecolor{GRAY}{RGB}{114, 114, 114}
\definecolor{POSTECH}{RGB}{200, 1, 80}

\usetikzlibrary{patterns}
\newcolumntype{R}[1]{>{\raggedleft\arraybackslash}p{#1}}
\newcolumntype{L}[1]{>{\raggedright\arraybackslash}p{#1}}

\newcommand{\ie}{{\it i.e.,}\;}
\newcommand{\eg}{{\it e.g.,}\;}

\hypersetup{
  colorlinks=true,
  linkcolor=col,   
  citecolor=col,    
  urlcolor=col    
}

\lstdefinestyle{pythonstyle}{
    language=Python,           % Set the default language to Python
    basicstyle=\ttfamily\small,% Font style for the code
    keywordstyle=\color{black}, % Style for keywords (e.g., 'str', 'None', 'True')
    stringstyle=\color{orange!60!black},   % Style for string literals (e.g., '"step 1"')
    commentstyle=\color{green!50!black}, % Style for comments (though not directly used in your provided plan string)
    numbers=left,              % Add line numbers to the left
    numberstyle=\tiny\color{gray}, % Style for line numbers
    breaklines=true,           % Allow lines to break automatically
    frame=single,              % Add a single frame around the code block
    rulecolor=\color{black},   % Color of the frame
    showstringspaces=false,    % Do not show a special symbol for spaces in strings
    tabsize=4,                 % Define tab size
    captionpos=b,              % Position caption at the bottom
    breakatwhitespace=false,   % Do not break lines only at whitespace
}

\lstdefinestyle{pythonstyle2}{
  language=Python,
  basicstyle=\ttfamily\tiny,
  keywordstyle=\color{blue}\bfseries,
  stringstyle=\color{orange!60!black},
  commentstyle=\color{green!50!black},
  numbers=none,
  breaklines=true,
  frame=none,
  showstringspaces=false,
  columns=fullflexible,
  xleftmargin = -12pt,
  xrightmargin = -12pt,
  framesep=0pt,
  framexleftmargin=-10pt,
  framexrightmargin=-10pt,
  aboveskip = -5pt,
  belowskip = -5pt,
  lineskip = -3pt,
  escapeinside={(@}{@)},
}

\lstdefinestyle{pythonstyle3}{
  backgroundcolor=\color{col!5},
  basicstyle=\ttfamily\small,
  keywordstyle=\color{black},
  stringstyle=\color{orange!60!black},
  commentstyle=\color{green!50!black},
  numbers=none,
  breaklines=true,
  frame=none,
  showstringspaces=false,
  columns=fullflexible,
  keepspaces=true,                 
  showspaces=false,
  framesep=0pt,
  belowskip = -6pt,
  escapeinside={(@}{@)},
}

\lstdefinestyle{pythonstyle4}{
  backgroundcolor=\color{col!5},
  language=Python,
  basicstyle=\ttfamily\small,
  keywordstyle=\color{blue}\bfseries,
  stringstyle=\color{orange!60!black},
  commentstyle=\color{green!50!black},
  identifierstyle=\color{black},
  numbers=none,
  breaklines=true,
  frame=none,
  showstringspaces=false,
  columns=fullflexible,
  keepspaces=true,                 
  showspaces=false,
  framesep=0pt,
  belowskip = -6pt,
  escapeinside={(@}{@)},
}

\usepackage[T1]{fontenc}
\usepackage[utf8]{inputenc}

\usepackage{microtype}

\usepackage{inconsolata}

\usepackage{graphicx}

\title{PaT: Planning-after-Trial for Efficient Test-Time Code Generation}%\vspace{0.5cm}}
\author{
Youngsik Yoon$^1$, Sungjae Lee$^1$, Seockbean Song$^2$, Siwei Wang$^3$, Wei Chen$^3$, Jungseul Ok$^{1,2}$\thanks{Corresponding author.}\\
$^1$Department of Computer Science and Engineering, POSTECH, South Korea\\
$^2$Graduate School of Artificial Intelligence, POSTECH, South Korea\\
$^3$Microsoft Research Asia, Beijing, China\\
\{ysyoon97, sungjaelee25, shinebobo, jungseul.ok\}@postech.ac.kr, \\
\{siweiwang, weic\}@microsoft.com
}

\begin{document}
\maketitle
\begin{abstract}
Beyond training-time optimization, scaling test-time computation has emerged as a key paradigm to extend the reasoning capabilities of Large Language Models (LLMs).
However, most existing methods adopt a rigid Planning-before-Trial (PbT) policy, which inefficiently allocates test-time compute by incurring planning overhead even on directly solvable problems.
We propose Planning-after-Trial (PaT), an adaptive policy for code generation that invokes a planner only upon verification failure.
This adaptive policy naturally enables a heterogeneous model configuration: a cost-efficient model handles generation attempts, while a powerful model is reserved for targeted planning interventions.
Empirically, across multiple benchmarks and model families, our approach significantly advances the cost-performance Pareto frontier.
Notably, our heterogeneous configuration achieves performance comparable to a large homogeneous model while reducing inference cost by approximately 69\%.
\end{abstract}

\begin{figure}[!t]
    \centering
    \includegraphics[width=0.95\linewidth, trim={0 0 0 0},clip]{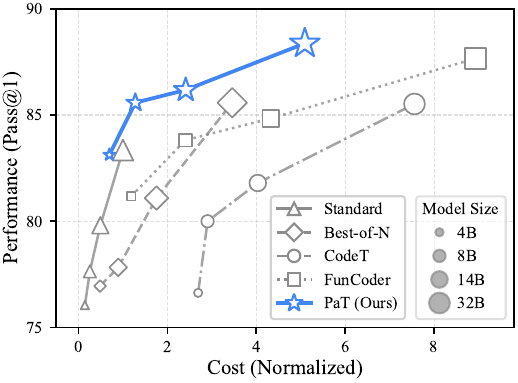}
    \vspace{-0.3cm}
    \caption{
    \textbf{Cost ($\downarrow$) - Pass@1 ($\uparrow$) trade-off across diverse sizes.}
    We plot the average Pass@1 across foundational benchmarks (HumanEval~\cite{chen2021evaluating}, MBPP~\cite{austin2021program} and their EvalPlus~\cite{evalplus} variants) against the relative inference cost.
    PaT consistently advances the Pareto frontier across model sizes (Qwen3$_\text{4B,8B,14B and 32B}$).
    Detailed results are provided in Section~\ref{sec:foundational} and Table~\ref{table:HumanEval}.
    }\label{fig:pareto}
    \vspace{-0.6cm}
\end{figure}

\begin{figure*}[!t]
    \centering
    \includegraphics[width=0.97\textwidth, trim={0 0 0 0},clip]{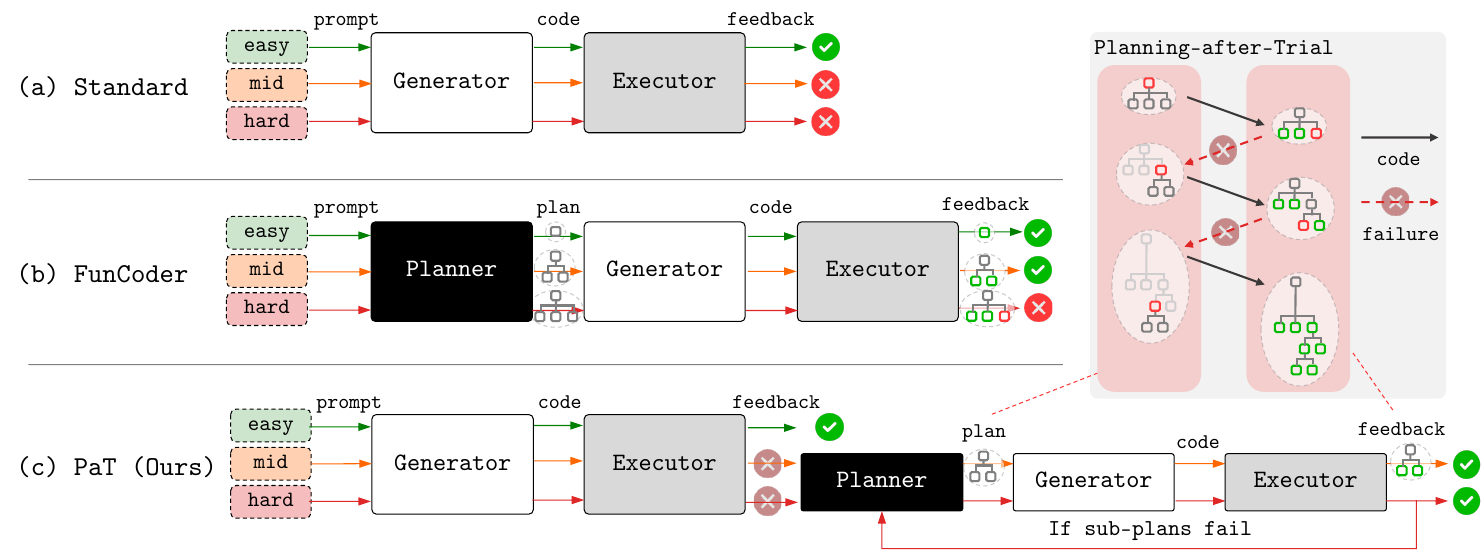}
    \vspace{-0.3cm}
    \caption{\textbf{Comparison with existing methods and PaT (Ours).}
    Problems are grouped by difficulty (easy, mid, and hard). 
    Boxes denote the key components: a Generator (creates code), a Planner (decomposes the problem), and an Executor (verifies the solution).
    \textbf{(a) Standard} directly generate and execute; works on easy problems but often fails on harder ones.
    \textbf{(b) FunCoder (PbT)} always plans first, so planning cost is paid even when unnecessary.
    \textbf{(c) PaT (Ours)} trials first and plans only on failure; solves easy problems cheaply and hard problems adaptively.
    }\label{fig:concept}
    \vspace{-0.5cm}
\end{figure*}

\section{Introduction}
\label{sec:introduction}

%% 1. code generation 엄청 중요하고, 연구많이 됨. code generation을 위해, SFT, RLHF 같은걸로 post-training 하는 것도 있지만, 동시에 test time scaling (or inference time scaling)도 연구 많이됨 (사람이 코딩하는 것처럼, writing, debugging,...) planning, reasoning 등의 복잡한 체계를 수반하는 scaling methods는 계산량 문제가 큼. 우리는 scaling methods 중에 cost-performance pareto front를 당기는걸 목표. (figure 1에서 좌상단으로...)

%% 2. computational efficiency를 위한 우리의 핵심 아이디어는, PaT임. 기존에는 planning이 선행되어 하나씩 코딩하는 식임. (예: funcoder). 엄청 쉬운것도 planning하고 generation하는 식으로해서 계산량 낭비됨. 예를들면... 73%....

%% 3. 

Large Language Models (LLMs) have achieved remarkable success in code generation, driven by massive model and data scaling \citep{chen2021evaluating, li2022competition, openai2023gpt4}.
Beyond scaling, conventional approaches focused on training-time strategies, such as Supervised Fine-Tuning (SFT) \citep{roziere2023code, luo2023wizardcoder} and Reinforcement Learning from Human Feedback (RLHF) \citep{ouyang2022training}.
Complementing these efforts, recent research scales test-time computation to handle complex algorithmic logic \citep{wei2022chain, yao2023tree, snell2025scaling}.
In particular, advancements have been made to simulate human-like coding behavior by decomposing complex problems into manageable sub-problems \citep{chen2024divide, le2024codechain} rather than generating the solution directly.

However, existing test-time scaling code generation methods often incur prohibitive inference cost.
This is primarily due to their heavy reliance on explicit planning modules \citep{chen2024divide, le2024codechain} or iterative repair and debugging loops \citep{shinn2023reflexion, zhong2024debug}, which consume significant computational resources.
As illustrated in Figure~\ref{fig:pareto}, previous state-of-the-art methods such as FunCoder~\citep{chen2024divide} consume excessive test-time resources,
often making a small model (Qwen3$_\text{4B}$) with this method more expensive than a much larger model (Qwen3$_\text{32B}$) with standard inference, even with inferior performance.
This indicates that previous methods underperform standard scaling in their cost-performance ratio.

This inefficiency stems from ignoring the intrinsic difficulty distribution of the workload. 
As shown in Figure~\ref{fig:pareto}, a small model (Qwen3$_\text{4B}$) already achieves 76\% Pass@1 using standard inference, suggesting that a substantial fraction of tasks can be solved without complex interventions.
Only a minority of hard instances truly require expensive planning.
Nonetheless, many prior approaches \citep{chen2024divide, le2024codechain, jiang2024self} follow a rigid ``Planning-before-Trial'' (PbT) policy, wasting resources by indiscriminately applying expensive planning to these tractable problems.

To address this, we introduce ``Planning-after-Trial'' (PaT), an adaptive method that inverts PbT.
Unlike PbT, which enforces planning for every problem, PaT initiates with standard inference and leverages execution feedback to verify this solution.
As illustrated in Figure~\ref{fig:concept}, expensive decomposition is triggered only when the initial attempt fails.
By strictly limiting planning to problems that demonstrably require it, PaT concentrates computational resources precisely on hard instances while keeping simple ones cheap.
Consequently, PaT significantly lowers the average inference cost while improving performance, thereby advancing the cost-performance Pareto frontier as shown in Figure~\ref{fig:pareto}.

The efficiency of PaT is further amplified within a heterogeneous model configuration.
This enables a strategic division of labor, where a cost-efficient model is assigned to the high-volume ``generator'' role and a powerful one to the infrequent ``planner'' role.
This alignment reflects the intuition that planning requiring more complex reasoning (\eg problem decomposition) is better handled by stronger models, while the resulting decomposed sub-tasks can often be implemented proficiently by smaller models.

We empirically validate our claims across diverse model families and a comprehensive suite of benchmarks.
Our results first establish that PaT consistently outperforms the state-of-the-art PbT baseline, FunCoder \citep{chen2024divide}, achieving higher performance across all model scales while using, on average, only 60\% of the inference cost.
Furthermore, we show that the heterogeneous model configuration further enhances this efficiency.
For instance, a small generator model guided by a powerful planner achieves competitive performance, with a <1\% gap to a larger model in the homogeneous setting, while incurring only 31\% of the cost.
These findings confirm that PaT's adaptive policy establishes a new, superior cost-performance frontier.
Collectively, our work demonstrates that scaling test-time compute need not be uniform, proving that unnecessary overhead can be effectively eliminated while maintaining or even enhancing performance.

Our contributions are summarized as follows:
\begin{itemize}[left=0.3cm]
\item We propose Planning-after-Trial (PaT), an adaptive policy that invokes planning only upon verification failure to avoid unnecessary overhead.
\item We pair PaT with a heterogeneous model configuration, combining small models for generation and large models for planning to maximize efficiency.
\item We provide a comprehensive evaluation across model families and benchmarks, showing that our approach consistently advances the cost-performance Pareto frontier.
\end{itemize}

\section{Related work}
\label{sec:relatedwork}

\paragraph{Code Generation with LLMs.}
To advance code generation, research has prioritized enhancing intrinsic capabilities via post-training \citep{dubey2024llama, zhu2024deepseek} and self-improvement frameworks like rStar-Coder \citep{liu2025rstar}, SPIN \citep{chen2024self}, and SCoRE \citep{kumar2024training}.
In parallel, significant attention has shifted toward scaling test-time reasoning and computation given models.
Key approaches utilize few-shot prompting \citep{gpt3} and diversity sampling \citep{chen2021evaluating}, often refined by test-driven verification \citep{chen2022codet, shinn2023reflexion}, iterative refinement \citep{wang2024intervenor, zhang2024pair, zhong2024debug}, or retrieval from external bases \citep{li2022competition, zhang2023repocoder}.
To tackle higher complexity, recent systems have further incorporated hierarchical decomposition \citep{chen2024divide, le2024codechain} and natural language planning \citep{wang2025planning}.
While these approaches improve robustness, they often apply sophisticated interventions uniformly across problems, incurring unnecessary overhead for simple cases.
Notably, AdaCoder \citep{zhu2025adacoder} dynamically selects refinement strategies but still relies on iterative repair rather than structural decomposition.
In contrast, PaT optimizes test-time computation, reserving complex interventions strictly for verified failures using execution feedback as a precise trigger.

\paragraph{Decomposition and Structured Reasoning.}
To tackle complex problems beyond the reach of direct generation, divide-and-conquer strategies have evolved from implicit reasoning steps in Chain-of-Thought \citep{wei2022chain} and Tree-of-Thoughts \citep{yao2023tree} to explicit planning paradigms.
This evolution spans diverse domains, ranging from sequential solving in Least-to-Most prompting \citep{zhou2023least} and mathematical reasoning with Program of Thoughts \citep{chen2022program}, to using a society of models for complex tasks \citep{juneja2024texttt}.
In the code domain, structured pipelines like Self-Plan \citep{jiang2024self} and FunCoder \citep{chen2024divide} explicitly decompose problems before implementation.
However, these approaches predominantly adhere to a rigid Planning-before-Trial (PbT) policy, where planning is invoked unconditionally prior to execution.
While recent work attempts to mitigate this inefficiency by learning an adaptive policy for invoking planning \citep{paglieri2025learning}, such approaches introduce additional training complexity and auxiliary models.
In contrast, PaT offers a simpler, learning-free alternative: it utilizes the definitive execution signal as a reactive trigger, avoiding both the universal cost of PbT and the overhead of training separate policy models.

\section{PaT: Planning after Trial}
\label{sec:method}
\paragraph{Problem Formulation.}
We model a code generation instance as a specification $x$ (e.g., natural language description) and seek a program $\mathcal{F}$ that satisfies $x$.
Let $M_G$ denote a generator model and $M_P$ a planner model.
For any specification $x$, the generator $M_G$ produces a direct candidate implementation, denoted $\hat{f}$.
The planner $M_P$, when invoked, produces a decomposition plan consisting of a new top-level implementation, $\hat{f}$, and a set of subproblem specifications $\{x_i\}$. The final program $\mathcal{F}$ is constructed by $\textsc{Compose}$ that merges an implementation $\hat{f}$ with a set of verified helper functions $H$.
For verification, we construct a test set $\mathcal{T}(x)=\{(\mathrm{in}_j,\mathrm{out}_j)\}_{j=1}^t$ and execute programs in a sandboxed Python runtime.
We evaluate $\mathcal{F}$ with
\begin{align}    
\!\!\!\textsc{Evaluate}\big(\mathcal{F},\mathcal{T}(x)\big)
\!=\!\!
\sum_{j=1}^{t}\mathbf{1}\!\big[\mathcal{F}(\mathrm{in}_j)\!=\!\mathrm{out}_j\big],\!
\end{align}
\ie the number of tests passed.

\begin{algorithm}[t]
    \caption{\textsc{Planning after Trial (PaT)}}
    \label{alg:pat}
    \begin{algorithmic}[1]
        \State \textbf{Input:} Problem $x$, Set of Helper Functions $H$,
        \State \quad\quad\quad Generator $M_G$, Planner $M_P$
        \State \textbf{Output:} Generated Program $\mathcal{F}$
        \State \textbf{PaT:}
        \State $\hat{f} \gets M_G(x;H)$ \Comment{Best-of-$N$ trial}
        \State $\mathcal{T}(x) \gets \textsc{GenerateTests}(x)$
        \State $p \gets \textsc{Evaluate}(\textsc{Compose}(\hat{f},H), \mathcal{T}(x))$
        
        \If{$p = |\mathcal{T}(x)|$}
            \State $\mathcal{F} \gets \textsc{Compose}(\hat{f},H)$
            \State \textbf{return} $\mathcal{F}$ \Comment{Trial success}
        \EndIf
        
        \While{True}\Comment{Until success or plateau}
            \State $\hat{f}_\mathrm{prev}, p_\mathrm{prev} \gets \hat{f}, p$
            \State $\hat{f}, \{x_i\} \gets M_P(x; H)$ \Comment{Planner invoked}
            \For{each $x_i$ not implemented in $H$}
                \State $\hat{f}_i \gets \textsc{PaT}(x_i, H, M_G, M_P)$
                \State $H \gets H \cup \{\hat{f}_i\}$
            \EndFor
            
            \State $p \gets \textsc{Evaluate}(\textsc{Compose}(\hat{f},H), \mathcal{T}(x))$
            
            \If{$p = |\mathcal{T}(x)|$}\Comment{Final success}
                \State \textbf{return} $\textsc{Compose}(\hat{f},H)$ 
            \EndIf
            
            \If{$p \le p_\mathrm{prev}$}\Comment{Plateau condition}
                \State \textbf{return} $\textsc{Compose}(\hat{f}_\mathrm{prev},H)$ 
            \EndIf
        \EndWhile
    \end{algorithmic}
\end{algorithm}
% \caption{\textbf{Pseudocode for Planning after Trial (PaT).}
% \textsc{GenerateTests}$(x)$ constructs the unit-test set $\mathcal{T}(x)$ from the specification. \textsc{Compose}$(\hat{f},H)$ merges verified helper implementations into the parent program. The planner is invoked only on failure, and verified subsolutions are reused as context.}
% \label{fig:pseudo_code_clean}

\subsection{Adaptive Planning via Failure Feedback}
\label{subsec:adaptive-decomposition}

PaT executes a simple yet effective failure-triggered policy.
Given the problem specification $x$, the generator $M_G$ first attempts a Best-of-$N$ trial by sampling multiple candidate solutions.
We verify these candidates on a test set $\mathcal{T}(x)$. 
If any candidate passes all tests, the pipeline terminates and returns that solution immediately without incurring any decomposition overhead.
However, if all candidates fail, this consistent failure serves as a strong signal that the problem complexity exceeds the generator's direct reasoning capability, rather than being a simple implementation error.
Only in this case, PaT invokes the planner $M_P$ to decompose $x$ into subproblems $\{x_i\}$, which are solved recursively using the same trial-first policy.

When all subproblems have verified solutions, \ie $\textsc{Evaluate}(\hat{f}_i,\mathcal{T}(x_i)) = |\mathcal{T}(x_i)|$ for all $i$, PaT composes them into a parent-level candidate and re-verifies against $\mathcal{T}(x)$.
If this composite solution passes, the process terminates successfully.
Otherwise, the planner is invoked again on the original problem specification $x$.
For this subsequent planning attempt, the planner is provided with the original problem and the set of previously successful subsolutions $\{\hat{f}_i\}$ as additional context.
This allows the planner to make a more informed decomposition decision while reusing already successful components, further enhancing cost-efficiency.

\subsection{Test Cases and Verification}
\label{subsec:testcases}
Relying solely on provided public test cases is often insufficient.
Benchmarks typically provide only basic scenarios that lack critical edge cases \citep{chen2021evaluating}, or in some cases like MBPP, provide none at all \citep{austin2021program}.
To ensure robust operation in such environments, we explicitly generate test cases $\mathcal{T}(x)$, producing an average of 6.7 test cases per problem.

\begin{figure}[!t]
    \centering
    \begin{tikzpicture}
        \begin{axis}[
            ybar stacked,
            bar width=1cm,
            width=0.45\textwidth,
            height=5cm,
            ymin=0, ymax=100,
            ylabel={Percentage (\%)},
            ylabel style={font=\footnotesize},
            yticklabel style={font=\footnotesize},
            symbolic x coords={Qwen3$_{4B}$, Qwen3$_{8B}$, Qwen3$_{14B}$, Qwen3$_{32B}$},
            xtick=data,
            xticklabel style={align=center, font=\footnotesize},
            legend style={
                at={(0.5,1.05)}, 
                anchor=south, 
                legend columns=-1, 
                draw=none, 
                font=\footnotesize,
                /tikz/every even column/.append style={column sep=0.3cm}
            },
            nodes near coords={%
                \pgfmathprintnumber[fixed, precision=1]{\pgfplotspointmeta}\%%
            },
            every node near coord/.append style={
                font=\bfseries\sffamily\scriptsize, 
                color=black, 
                % yshift=-2pt % 숫자 위치 미세 조정
            },
            % 색상 설정: 0개(초록) -> 1개(노랑) -> 2개(주황) -> 3개+(빨강)
            cycle list={
                {fill=col!70, draw=black!50},
                {fill=col!50, draw=black!50},
                {fill=col!30, draw=black!50},
                {fill=col!10, draw=black!50}
            }
        ]
            % 1. Zero Incorrect (완벽함)
            \addplot coordinates {
                (Qwen3$_{4B}$, 63.4) 
                (Qwen3$_{8B}$, 55.7) 
                (Qwen3$_{14B}$, 70.3) 
                (Qwen3$_{32B}$, 66.5)
            };
            
            % 2. One Incorrect (사소한 오류)
            \addplot coordinates {
                (Qwen3$_{4B}$, 11.6) 
                (Qwen3$_{8B}$, 20.9) 
                (Qwen3$_{14B}$, 14.6) 
                (Qwen3$_{32B}$, 15.2)
            };
            
            % 3. Two Incorrect
            \addplot coordinates {
                (Qwen3$_{4B}$, 14.6) 
                (Qwen3$_{8B}$, 8.9) 
                (Qwen3$_{14B}$, 8.9) 
                (Qwen3$_{32B}$, 12.2)
            };
            
            % 4. Three or more Incorrect (심각한 노이즈)
            \addplot coordinates {
                (Qwen3$_{4B}$, 10.4) 
                (Qwen3$_{8B}$, 14.5) 
                (Qwen3$_{14B}$, 6.2) 
                (Qwen3$_{32B}$, 6.1)
            };
            
            \legend{0, 1, 2, 3+}
        \end{axis}
    \end{tikzpicture}
    \vspace{-0.2cm}
    \caption{\textbf{Distribution of incorrect test cases (false positives) per problem on HumanEval.} The percentage of problems where the generated test cases contain 0, 1, 2, or 3+ incorrect test cases, given an average of 6.7 generated test cases per problem.}
    \label{fig:test_noise_distribution}
    \vspace{-0.3cm}
\end{figure}

However, prior works note that generated test cases can be noisy or incorrect \citep{chen2022codet, wang2025testeval, prasad2025learning}.
To mitigate this, most previous works adopt consensus-based scoring to identify the most robust output \citep{chen2022codet, chen2024divide}. 
This approach evaluates a pool of candidate outputs against generated test inputs, selecting the one that achieves the broadest consensus.
However, such consensus-based scoring is ill-suited for our framework, as its objective of selecting the most likely correct outputs contrasts with PaT's need for a definitive binary signal (success or failure) to trigger its adaptive escalation to planning.

We therefore separate the success criterion from the termination rule.
The success criterion is strict: a candidate is accepted only if it passes all test cases in $\mathcal{T}(x)$.
This reliance on a strict signal is feasible because, as shown in Figure~\ref{fig:test_noise_distribution}, generated test cases are completely error-free for the majority of problems (e.g., 63.4\% on HumanEval with Qwen3$_\text{4B}$).
Furthermore, even in noisy instances, errors are typically limited to 1-2 false positives, with severe noise (3+ errors) concentrated in a small minority.
To handle these residual noisy test cases and prevent unproductive recursion, we track the pass count $p^{(t)}=\textsc{Evaluate}(\hat f,\mathcal{T}(x))$ at iteration $t$ and apply a plateau rule:
if $p^{(t)} \le p^{(t-1)}$ then halt and return the best-known solution.
We find this heuristic to be practically effective, as it prevents the model from overfitting to potential false positives by terminating the search when performance saturates.
% Intuitively, if the pass count does not strictly improve, we assume all valid test cases have been satisfied and remaining failures are due to noisy test cases.
% To avoid overfitting to potentially incorrect failing test cases, we do not provide them as explicit feedback to the generator.

% We therefore separate the success criterion from the termination rule.
% The success criterion is strict: a candidate is accepted only if it passes all tests in \(\mathcal{T}(x)\).
% To handle noisy tests and prevent unproductive recursion, we track the pass count \(p^{(t)}=\textsc{Evaluate}(\hat f,\mathcal{T}(x))\) at iteration $t$ and apply a plateau rule:
% if $p^{(t)} \le p^{(t-1)}$ then halt and return the best-known solution.
% Intuitively, if the pass count does not strictly improve, we assume all valid tests have been satisfied and remaining failures are due to flawed tests.
% To avoid overfitting to potentially incorrect failing tests, we do not provide those failing cases as explicit feedback to the generator.

\subsection{Heterogeneous Model Configuration}
\label{sec:heterogeneous}
To further enhance efficiency, we adopt a heterogeneous model configuration that assigns distinct models to the two key roles inherent in PaT: the generator and the planner.
These roles place different demands on model capability.
The generator's task, which is to produce a solution for a well-scoped and often manageable subproblem, can be effectively handled by a cost-efficient small language model (sLM).
In contrast, the planner's task, which is to understand the nuances of a complex specification and propose a helpful decomposition, benefits from the advanced reasoning of a high-performance large language model (LLM).
Building on this observation, we instantiate PaT with an sLM as the generator and an LLM as the planner.

However, achieving efficiency requires more than simply ``using a smaller model as the generator.''
While decomposition is less expensive than full generation, it still incurs a planning token cost.
We must therefore navigate a critical trade-off: an overly weak sLM triggers frequent failures, necessitating repeated interventions by the planner, which raises the total cost.
Conversely, an overly strong sLM reduces the need for planning but increases the baseline cost of every trial, eroding the benefits of heterogeneity.
Therefore, maximizing efficiency requires empirically tuning the generator selection to strike a balance between minimizing baseline costs and avoiding excessive planning overhead.
To provide a rigorous foundation for these observations, we provide a formal analysis of this trade-off in Appendix~\ref{sec:app_theory}.
% Therefore, the optimal strategy lies in selecting an sLM that is sufficiently capable of solving simplified subproblems, thereby reserving the planner's expensive compute solely for essential structural decompositions.
\section{Experiments}
\label{sec:experiments}
In this section, we evaluate PaT in terms of both performance and cost efficiency.
We conduct experiments on two settings: a homogeneous setting (Section~\ref{sec:homogeneous}), and a heterogeneous setting (Section~\ref{sec:hybrid-exp}).
Below we describe the experimental setup used in our evaluation, including benchmarks, LLMs, evaluation metrics, and baselines.

\paragraph{Benchmarks.}
We evaluate on established code generation benchmarks. 
\textbf{HumanEval} \citep{chen2021evaluating} and \textbf{MBPP} \citep{austin2021program} measure foundational code generation.
For both, we use EvalPlus \citep{evalplus} to obtain expanded and more robust unit tests (\textit{i.e.}, \textbf{HumanEval+} and \textbf{MBPP+}).
For challenging tasks, we use \textbf{xCodeEval} \citep{khan2023xcodeeval} benchmark, partitioning instances into four categories (Easy, Mid, Hard, and Expert) following FunCoder's rating scheme.

\paragraph{LLMs.}
We primarily adopt the Qwen3 family (4B, 8B, 14B, and 32B) \citep{yang2025qwen3} as our main testbed, since it provides fine-grained scaling steps within a single model family. 
To evaluate the cross-family generalization of our method, 
we additionally conduct experiments on Llama-3.1-8B-Instruct \citep{dubey2024llama} 
and DeepSeek-Coder-V2-Lite-Instruct ($\approx$16B) \citep{zhu2024deepseek}.
For the cost analysis, we calculate inference costs based on public market pricing as detailed in Appendix~\ref{sec:app_LLMs}.

\paragraph{Evaluation Metric.}
We report two primary metrics: \textbf{Pass@1} and \textbf{LLM cost}. 
Pass@1 is our primary performance metric, reflecting the ability to generate a correct solution on the first attempt, which aligns with real-world efficiency goals.
LLM cost for each experiment is computed directly from the per-token prices in Table~\ref{table:LLM_cost}.

\paragraph{Baselines.}
To validate the effectiveness of PaT, we compare it against the following state-of-the-art algorithms for scaling test-time computation:

\begin{itemize}[left=0.3cm]
\item \textbf{Standard} \citep{gpt3} uses few-shot prompting, giving several in-context examples to the model and asking it to generate a solution; we follow the original prompting protocol to measure baseline quality without post-hoc filtering or decomposition.
\item \textbf{Best-of-N} \citep{chen2021evaluating} generates multiple candidates (N=5) and selects the best one; we include it to distinguish the gains of our adaptive planning from those achievable simply by scaling inference compute.
\item \textbf{CodeT} \citep{chen2022codet} performs consensus-based selection by executing solutions against automatically generated test cases; it serves as a representative benchmark for the efficacy of test-driven verification methods.
\item \textbf{FunCoder} \citep{chen2024divide} implements a static divide-and-conquer pipeline with a fixed hierarchy, solves subproblems, and assembles the final solution; we compare against it to highlight the efficiency gains of our adaptive, failure-driven policy over rigid Planning-before-Trial strategies.
\end{itemize}

For the sake of reproducibility and clarity, a detailed account of our implementation is included in Appendix~\ref{sec:app_implementation}--\ref{sec:app_prompt}.

\subsection{Homogeneous Setting}
\label{sec:homogeneous}
In this section, we compare results from running PaT using a homogeneous language model against baselines. 
Evaluations are performed on the foundational benchmarks (HumanEval and MBPP; Sec.~\ref{sec:foundational}) and the challenging benchmark (xCodeEval; Sec.~\ref{sec:complex}).

\begin{table*}[!t]
\centering

\footnotesize
\begin{tabular}{L{2cm}L{1.5cm}R{0.8cm}R{0.8cm}R{0.8cm}R{0.8cm}R{0.8cm}R{1.0cm}R{0.8cm}}%R{0.7cm}}
\toprule
\multirow{2}{*}{\textbf{Model}} & \multirow{2}{*}{\textbf{Method}} & \multicolumn{2}{c}{\textbf{HumanEval}} & \multicolumn{2}{c}{\textbf{MBPP}} & \multirow{2}{*}{\textbf{Avg.}} & \multirow{2}{*}{$\Delta$ \textbf{Avg.}}& \multirow{2}{*}{\textbf{Cost}}% & \multirow{2}{*}{\textbf{Eff.}}
\\
\cmidrule(lr){3-4} \cmidrule(lr){5-6}
& & \multicolumn{1}{c}{base} & \multicolumn{1}{c}{plus} & \multicolumn{1}{c}{base} & \multicolumn{1}{c}{plus} & & & 
\\
\midrule
\multirow{5}{*}{Qwen3$_\text{4B}$} 
& Standard & 78.66 & 70.12 & 79.43 & 76.00 & 76.05 & - & 1.00 %& - 
\\
& Best-of-N & 79.27 & 71.95 & 80.00 & 76.57 & 76.95 & + 0.90 & 3.39 %& 0.38
\\
& CodeT & 78.66 & 70.73 & 80.00 & 77.14 & 76.63 & + 0.58 & 18.82  %& - 0.07
\\
& FunCoder & {85.98} & {79.88} & {81.14} & {77.71} & {81.18} & + 5.13 & 8.31 %& {0.70}
\\
& PaT (Ours) & \textbf{89.63} & \textbf{82.32} & \textbf{82.29} & \textbf{78.29} & \textbf{83.13} & + 7.08 & 4.85 %& \textbf{1.84}
\\
\cmidrule(lr){1-9}
\multirow{5}{*}{Qwen3$_\text{8B}$} 
& Standard & 78.66 & 71.34 & 81.14 & 79.43 & 77.64 & - & 1.00 %& - 
\\
& Best-of-N & 80.49 & 72.56 & 80.57 & 77.71 & 77.83 & + 0.19 & 3.50 %& 0.08
\\
& CodeT & 80.49 & 73.78 & {84.57} & 81.14 & 80.00 & + 2.36 & 11.37 %& 0.23
\\
& FunCoder & {89.02} & {81.10} & 83.43 & {81.71} & {83.82} & + 6.18 & 9.43 %& {0.73}
\\
& PaT (Ours) & \textbf{90.85} & \textbf{82.32} & \textbf{85.14} & \textbf{84.00} & \textbf{85.58} & + 7.94 & 5.00 %& \textbf{1.98}
\\
\cmidrule(lr){1-9}
\multirow{5}{*}{Qwen3$_\text{14B}$} 
& Standard & 83.54 & 76.83 & 80.57 & 78.29 & 79.81 & - & 1.00 %& - 
\\
& Best-of-N & 82.93 & 77.44 & 82.86 & 81.14 & 81.09 & + 1.28 & 3.58 %& 0.50
\\
& CodeT & 83.54 & 76.22 & \textbf{85.71} & 81.71 & 81.80 & + 1.99 & 8.22 %& 0.28
\\
& FunCoder & {89.63} & {81.71} & {85.14} & {82.86} & {84.84} & + 5.03 & 8.82 %& {0.64}
\\
& PaT (Ours) & \textbf{91.46} & \textbf{83.54} & \textbf{85.71} & \textbf{84.00} & \textbf{86.18} & + 6.37 & 4.91 %& \textbf{1.48}
\\
\cmidrule(lr){1-9}
\multirow{5}{*}{Qwen3$_\text{32B}$} 
& Standard & 87.20 & 80.49 & 83.43 & 82.29 & 83.35 & - & 1.00 %& - 
\\
& Best-of-N & 90.24 & 82.93 & 85.14 & 84.00 & 85.58 & + 2.23 & 3.46 %& 0.44
\\
& CodeT & 89.02 & 80.49 & {86.86} & {85.71} & 85.52 & + 2.17 & 7.56 %& 0.16
\\
& FunCoder & {93.29} & \textbf{84.76} & {86.86} & {85.71} & {87.66} & + 4.31 & 8.93 %& {0.47}
\\
& PaT (Ours) & \textbf{93.90} & {84.15} & \textbf{88.57} & \textbf{86.86} & \textbf{88.37} & + 5.02 & 5.09 %& \textbf{0.81}
\\
\midrule
\multirow{5}{*}{Llama3.1$_\text{8B}$} 
& Standard & 68.29 & 59.15 & 66.86 & 66.29 & 65.15 & - & 1.00 %& - 
\\
& Best-of-N & 70.12 & 62.20 & {72.57} & 69.14 & 68.51 & + 3.36 & 2.60 %& - 1.37
\\
& CodeT & 71.95 & 61.59 & 72.00 & {71.43} & 69.24 & + 4.09 & 4.86 %& {1.06}
\\
& FunCoder & {75.00} & {67.68} & 72.00 & {71.43} & {71.53} & + 6.38 & 8.07 %& 0.90
\\
& PaT (Ours) & \textbf{78.05} & \textbf{68.90} & \textbf{74.29} & \textbf{72.00} & \textbf{73.31} & + 8.16 & 5.32 %& \textbf{1.89}
\\
\cmidrule(lr){1-9}
\multirow{5}{*}{DeepSeek-Coder} 
& Standard & 83.54 & 75.61 & 80.57 & 78.86 & 79.65 & - & 1.00 %& - 
\\
& Best-of-N & 81.10 & 75.00 & 81.71 & 81.14 & 79.74 & + 0.09 & 3.05 %& 0.04
\\
& CodeT & 81.10 & 72.56 & {84.57} & 82.86 & 80.27 & + 0.62 & 6.40 %& 0.11
\\
& FunCoder & {85.98} & {79.27} & \textbf{85.14} & {84.00} & {83.60} & + 3.95 & 8.77 %& {0.51}
\\
& PaT (Ours) & \textbf{86.59} & \textbf{79.88} & \textbf{85.14} & \textbf{85.14} & \textbf{84.19} & + 4.54 & 5.97 %& \textbf{0.91}
\\
\bottomrule
\end{tabular}
\vspace{-0.2cm}
\caption{\textbf{Performance and cost comparison on foundational benchmarks.}
We report Pass@1 across four benchmarks (HumanEval, HumanEval+, MBPP, and MBPP+) and their average (`Avg.'). Cost is normalized relative to the Standard baseline (1.00). Best results are in \textbf{bold}.
}
\label{table:HumanEval}
\end{table*}

\subsubsection{Foundational Benchmarks}
\label{sec:foundational}
We evaluate PaT and baselines on the foundational benchmarks (HumanEval and MBPP) and their EvalPlus variants.
As shown in Table~\ref{table:HumanEval}, PaT consistently outperforms all baselines across all tested model families and scales.
The effectiveness of this policy is best illustrated by its ability to dramatically improve the capability of smaller models.
For instance, PaT enables Qwen3$_\text{4B}$ to achieve an average Pass@1 of 83.13\%, which is remarkably similar to the 83.35\% achieved by Standard on Qwen3$_\text{32B}$, a model eight times larger.
This demonstrates that PaT is not merely an incremental improvement but a powerful policy that fundamentally alters the performance curve of a given model.

% Beyond performance, PaT demonstrates superior cost-efficiency, outperforming FunCoder, the previous state-of-the-art test-time code generation method, while consuming only 60\% of its cost.
% This efficiency stems from the fact that Standard solves $\sim$76\% of problems directly; thus, FunCoder's rigid policy incurs unnecessary overhead on the majority of tasks.
% In contrast, PaT restricts planning to the $\sim$24\% of instances that fail the initial trial.
% By avoiding universal planning overhead, PaT establishes a new and more effective cost-performance frontier.

Beyond its superior performance, PaT also shows critical advantages in cost-efficiency.
As detailed in Table~\ref{table:HumanEval}, PaT consistently achieves higher performance than FunCoder, the previous state-of-the-art hierarchical method, while consuming, on average 60\% of its cost.
This efficiency gain can be explained by examining Standard performance.
On these foundational benchmarks, Standard solves, on average, 76\% of problems directly, demonstrating that a large majority of tasks do not require decomposition.
FunCoder's rigid PbT policy is forced to pay a cost of planning on all of these problems, incurring its expensive planning overhead even when it is not needed.
In contrast, PaT only invokes its planner on the much smaller fraction of problems (24\%) that actually fail the initial, cheaper trial.
By avoiding this universal planning overhead, PaT establishes a new and more effective cost-performance frontier for code generation.

\begin{table*}[!t]
\centering
\footnotesize
\begin{tabular}{L{2cm}L{1.5cm}R{0.8cm}R{0.8cm}R{0.8cm}R{0.8cm}R{0.8cm}R{1.0cm}R{0.8cm}}
\toprule
\textbf{Model} & \textbf{Method} & \textbf{Easy} & \textbf{Mid} & \textbf{Hard} & \textbf{Expert} & \textbf{All} & $\Delta$ \textbf{All} & \textbf{Cost} \\
\midrule
\multirow{5}{*}{Qwen3$_\text{4B}$} 
& Standard & 37.70 & 17.86 & 3.45 & 0.00 & 18.40 & - & 1.00  \\
& Best-of-N & 51.91 & {29.46} & 8.05 & 0.00 & 27.00 & + 8.60 & 4.06  \\
& CodeT & 40.44 & 20.54 & 3.45 & 0.00 & 20.00 & + 1.60 & 21.64  \\
& FunCoder & {55.19} & {29.46} & {12.64} & 0.00 & {29.00} & + 10.60 & 12.95  \\
& PaT (Ours) & \textbf{61.75} & \textbf{40.18} & \textbf{14.94} & 0.00 & \textbf{34.20} & + 16.20 & 17.93 \\
\cmidrule(lr){1-9}
\multirow{5}{*}{Qwen3$_\text{8B}$} 
& Standard & 54.10 & 28.57 & 5.75 & 0.00 & 27.20 & -  & 1.00 \\
& Best-of-N & {66.67} & 42.86 & 6.90 & 0.00 & {35.20} & + 8.00 & 3.34 \\
& CodeT & 45.92 & 31.25 & 8.05 & 0.00 & 25.20 & - 2.00 & 17.00 \\
& FunCoder & 64.48 & {43.75} & {9.20} & 0.00 & 35.00 & + 7.80 & 8.62 \\
& PaT (Ours) & \textbf{69.95} & \textbf{45.54} & \textbf{11.49} & 0.00 & \textbf{37.80} & + 10.60 & 6.98 \\
\cmidrule(lr){1-9}
\multirow{5}{*}{Qwen3$_\text{14B}$} 
& Standard & 53.55 & 36.61 & 9.20 & 0.00 & 25.20 & - & 1.00  \\
& Best-of-N & 68.31 & 50.00 & 13.79 & 0.00 & 38.60 & + 13.40 & 3.16 \\
& CodeT & 55.19 & 41.07 & 9.20 & 0.00 & 31.00 & + 5.80 & 7.48 \\
& FunCoder & {73.22} & {52.68} & {18.39} & 0.00 & {41.80} & + 16.60 & 9.03  \\
& PaT (Ours) & \textbf{73.77} & \textbf{53.57} & \textbf{21.84} & \textbf{0.85} & \textbf{43.00} & + 17.80 & 6.49 \\
\cmidrule(lr){1-9}
\multirow{5}{*}{Qwen3$_\text{32B}$} 
& Standard & 54.64 & 39.29 & 11.49 & 0.00 & 30.80 & - & 1.00\\
& Best-of-N & 71.04 & {50.00} & 14.94 & 0.00 & 39.80 & + 9.00 & 3.28 \\
& CodeT & 57.92 & 39.29 & 12.64 & 0.00 & 32.20 & + 1.40 & 7.55 \\
& FunCoder & \textbf{74.86} & \textbf{54.46} & {16.09} & 0.00 & {42.40} & + 11.60 & 7.87 \\
& PaT (Ours) & {74.32} & \textbf{54.46} & \textbf{18.39} & \textbf{1.69} & \textbf{43.00} & + 12.20 & 6.00 \\
\midrule
\multirow{5}{*}{Llama3.1$_\text{8B}$} 
& Standard & 13.11 & 3.57 & {1.15} & 0.00 & 5.80 & - & 1.00\\
& Best-of-N & 22.40 & {8.04} & {1.15} & 0.00 & 10.20 & + 4.40 & 1.39 \\
& CodeT & 13.11 & 3.57 & {1.15} & 0.00 & 5.80 & + 0.00 & 2.59  \\
& FunCoder & {26.78} & 7.14 & \textbf{2.30} & 0.00 & {11.80} & + 6.00 & 5.46  \\
& PaT (Ours) & \textbf{32.79} & \textbf{11.61} & \textbf{2.30} & 0.00 & \textbf{15.00} & + 14.20 & 8.42 \\
\cmidrule(lr){1-9}
\multirow{5}{*}{DeepSeek-Coder} 
& Standard & 43.17 & 19.64 & 9.20 & {0.85} & 22.00 & - & 1.00\\
& Best-of-N & 59.56 & 26.79 & 10.34 & {0.85} & 29.80 & + 7.80 & 2.57 \\
& CodeT & 46.99 & 22.32 & 9.20 & 0.00 & 23.80 & + 1.80 & 5.46 \\
& FunCoder & {62.30} & {31.25} & {12.64} & \textbf{1.69} & {32.40} & + 10.40 & 8.69 \\
& PaT (Ours) & \textbf{63.39} & \textbf{33.93} & \textbf{13.79} & \textbf{1.69} & \textbf{33.60} & + 11.60 & 7.69 \\
\bottomrule
\end{tabular}
\vspace{-0.2cm}
\caption{\textbf{Performance breakdown on xCodeEval by difficulty.}
We report Pass@1 scores for each difficulty category (Easy, Mid, Hard, and Expert) and the overall score.
}
\label{table:xCodeEval}
\end{table*}
\begin{figure*}[!t]   % r=오른쪽, 너비=텍스트폭의 45%
    % \centering
    \begin{subfigure}[!t]{0.272\textwidth}
    % \hspace*{-0.8cm} 
    \begin{tikzpicture}
  \begin{axis}[
      ybar,
      bar width=14pt,
      ymin=0,
      ymax=100,
      ylabel={Proportion (\%)},
      xlabel={Difficulty},
      ylabel style={yshift=-9pt},
      xlabel style={yshift=5pt},
      tick label style={font=\scriptsize},
      label style={font=\scriptsize},
      legend style={font=\scriptsize},
      symbolic x coords={Easy, Mid, Hard, Expert},
      xtick={Easy, Mid, Hard, Expert},
      scale only axis,
      tick style={draw=none},
      tick align=inside,
      nodes near coords,
      point meta = y,
      nodes near coords align={vertical},
      width=0.99\linewidth,
      height=2.5cm,
      enlarge x limits=0.17,
      legend pos=north west,
  ]

    \addlegendimage{ybar, area legend, pattern={north east lines}, pattern color=black}
    \addlegendentry{FunCoder}
    
    \addlegendimage{ybar, area legend, fill=black}
    \addlegendentry{PaT}

    \addplot[bar shift = -7pt, font=\tiny, fill=BLUE, pattern={north east lines}, pattern color=BLUE, nodes near coords] coordinates {(Easy,6.8)};
    \addplot[bar shift = -7pt, font=\tiny, fill=GREEN, pattern={north east lines}, pattern color=GREEN, nodes near coords={6.0}] coordinates {(Mid,6.0)};
    \addplot[bar shift = -7pt, font=\tiny, fill=YELLOW, pattern={north east lines}, pattern color=YELLOW, nodes near coords] coordinates {(Hard,19.5)};
    \addplot[bar shift = -7pt, font=\tiny, fill=RED, pattern={north east lines}, pattern color=RED, nodes near coords] coordinates {(Expert,25.4)};
    \addplot[bar shift = 7pt, font=\tiny, fill=BLUE, nodes near coords] coordinates {(Easy,24.6)};
    \addplot[bar shift = 7pt, font=\tiny, fill=GREEN, nodes near coords] coordinates {(Mid,38.6)};
    \addplot[bar shift = 7pt, font=\tiny, fill=YELLOW, nodes near coords] coordinates {(Hard,44.8)};
    \addplot[bar shift = 7pt, font=\tiny, fill=RED, nodes near coords] coordinates {(Expert,71.2)};
  \end{axis}
\end{tikzpicture}
    \vspace{-0.7cm}
    \caption{Decomposition rate}
    \vspace{-0.25cm}
    \label{fig:xCodeEval_decomposition}
    \end{subfigure}
    \hspace{0.6cm}
    \begin{subfigure}[!t]{0.272\textwidth}
    % \hspace*{-0.6cm}
    \begin{tikzpicture} % TikZ 그림 시작
    \begin{axis}[ % PGFPlots 축 설정 시작
        % 축 라벨
        xlabel={Cost ($10^{-2}$\$)},
        ylabel={Proportion (\%)},
        % 라벨 및 틱 라벨 폰트 크기
        label style={font=\scriptsize},
        ylabel style={yshift=-9pt},
        xlabel style={yshift=5pt},
        tick label style={font=\scriptsize},
        tick style={draw=none},
        legend style={font=\scriptsize},
        xmin=-100, xmax=10100, % 데이터 범위에 맞게 조정 (예: 50보다 작고, 2100보다 큰 값)
        ymin=0.0, ymax=100.0,   
        % 로그 스케일에 맞는 틱 라벨 설정 (예: 10, 100, 1000 등)
        xtick={0, 2000, 4000, 6000, 8000, 10000},
        xticklabels={0, 2, 4, 6, 8, 10},
        tick align=inside,
        scale only axis,
        grid=both, 
        major grid style={gray!50!white, line width=0.5pt}, 
        minor grid style={gray!20!white, line width=0.2pt}, 
        ymajorgrids, 
        xmajorgrids, 
        % 그래프 크기
        width=0.99\linewidth, 
        height=2.5cm,
        scaled x ticks = false,
    ]

    \addplot[
        color=BLUE,
        mark=diamond*, 
        line width=1pt,
        solid, 
        nodes near coords,
        point meta=explicit symbolic,
    ]
    coordinates { 
        (500, 21.6)
        (1500, 56.9)
        (2500, 12.3)
        (3500, 2.5)
        (4500, 2.5)
        (5500, 2.5)
        (6500, 1.0)
        (7500, 0.0)
        (8500, 0.5)
        (9500, 0.5)
    };
    \addlegendentry{Easy}; 

    \addplot[
        color=GREEN,
        mark=*, 
        line width=1pt,
        solid, 
        nodes near coords,
        point meta=explicit symbolic,
    ]
    coordinates {
        (500, 5.2)
        (1500, 45.5)
        (2500, 36.4)
        (3500, 6.5)
        (4500, 3.9)
        (5500, 0.0)
        (6500, 2.6)
        (7500, 0.0)
        (8500, 0.0)
        (9500, 0.0)
    };
    \addlegendentry{Mid}; 

    \addplot[
        color=YELLOW,
        mark=square*, 
        line width=1pt,
        solid, 
        nodes near coords,
        point meta=explicit symbolic,
    ]
    coordinates {
        (500, 5.2)
        (1500, 33.8)
        (2500, 22.1)
        (3500, 13.0)
        (4500, 6.5)
        (5500, 7.8)
        (6500, 7.8)
        (7500, 1.3)
        (8500, 0.0)
        (9500, 2.6)
    };
    \addlegendentry{Hard}; 

    \addplot[
        color=RED,
        mark=triangle*, 
        line width=1pt,
        solid, 
        nodes near coords, 
        point meta=explicit symbolic, 
    ]
    coordinates {
        (500, 8.2)
        (1500, 22.7)
        (2500, 22.7)
        (3500, 13.4)
        (4500, 7.2)
        (5500, 8.2)
        (6500, 7.2)
        (7500, 6.2)
        (8500, 2.1)
        (9500, 2.1)
    };
    \addlegendentry{Expert};

    % Easy 평균
    \addplot[BLUE, dashed] coordinates {(1967.5,0) (1967.5,100)};
    % Mid 평균
    \addplot[GREEN, dashed] coordinates {(2804.6,0) (2804.6,100)};
    % Hard 평균
    \addplot[YELLOW, dashed] coordinates {(4499.4,0) (4499.4,100)};
    % Expert 평균
    \addplot[RED, dashed] coordinates {(5717.0,0) (5717.0,100)};
    \end{axis}
\end{tikzpicture}
    \vspace{-0.7cm}
    \caption{Cost (FunCoder)}
    \vspace{-0.25cm}
    \label{fig:xCodeEval_FunCoder}
    \end{subfigure}
    \hspace{0.6cm}
    \begin{subfigure}[!t]{0.272\textwidth}
    % \hspace*{-0.6cm}
    \begin{tikzpicture} % TikZ 그림 시작
    \begin{axis}[ % PGFPlots 축 설정 시작
        % 축 라벨
        xlabel={Cost ($10^{-2}$\$)},
        ylabel={Proportion (\%)},
        % 라벨 및 틱 라벨 폰트 크기
        label style={font=\scriptsize},
        ylabel style={yshift=-9pt},
        xlabel style={yshift=5pt},
        tick label style={font=\scriptsize},
        tick style={draw=none},
        legend style={font=\scriptsize},
        xmin=-100, xmax=10100, % 데이터 범위에 맞게 조정 (예: 50보다 작고, 2100보다 큰 값)
        ymin=0.0, ymax=100.0,   
        % 로그 스케일에 맞는 틱 라벨 설정 (예: 10, 100, 1000 등)
        xtick={0, 2000, 4000, 6000, 8000, 10000},
        xticklabels={0, 2, 4, 6, 8, 10},
        tick align=inside,
        scale only axis,
        grid=both, 
        major grid style={gray!50!white, line width=0.5pt}, 
        minor grid style={gray!20!white, line width=0.2pt}, 
        % ymajorgrids, 
        % xmajorgrids, 
        % 그래프 크기
        width=0.99\linewidth, 
        height=2.5cm,
        scaled x ticks = false,
    ]

    \addplot[
        color=BLUE,
        mark=diamond*, 
        line width=1pt,
        solid, 
        nodes near coords,
        point meta=explicit symbolic,
    ]
    coordinates { 
        (500, 68.8)
        (1500, 16.8)
        (2500, 4.0)
        (3500, 4.0)
        (4500, 2.5)
        (5500, 2.0)
        (6500, 0.5)
        (7500, 1.0)
        (8500, 0.0)
        (9500, 0.5)
    };
    \addlegendentry{Easy}; 

    \addplot[
        color=GREEN,
        mark=*, 
        line width=1pt,
        solid, 
        nodes near coords,
        point meta=explicit symbolic,
    ]
    coordinates {
        (500, 48.7)
        (1500, 18.4)
        (2500, 11.8)
        (3500, 7.9)
        (4500, 5.3)
        (5500, 3.9)
        (6500, 2.6)
        (7500, 0.0)
        (8500, 0.0)
        (9500, 1.3)
    };
    \addlegendentry{Mid}; 

    \addplot[
        color=YELLOW,
        mark=square*, 
        line width=1pt,
        solid, 
        nodes near coords,
        point meta=explicit symbolic,
    ]
    coordinates {
        (500, 33.3)
        (1500, 29.5)
        (2500, 7.7)
        (3500, 12.8)
        (4500, 6.4)
        (5500, 5.1)
        (6500, 0.0)
        (7500, 1.3)
        (8500, 2.6)
        (9500, 1.3)
    };
    \addlegendentry{Hard}; 

    \addplot[
        color=RED,
        mark=triangle*, 
        line width=1pt,
        solid, 
        nodes near coords, 
        point meta=explicit symbolic, 
    ]
    coordinates {
        (500, 15.8)
        (1500, 19.7)
        (2500, 9.2)
        (3500, 13.2)
        (4500, 7.9)
        (5500, 6.6)
        (6500, 13.2)
        (7500, 5.3)
        (8500, 6.6)
        (9500, 2.6)
    };
    \addlegendentry{Expert};
    % Easy 평균
    \addplot[BLUE, dashed] coordinates {(1438.2,0) (1438.2,100)};
    % Mid 평균
    \addplot[GREEN, dashed] coordinates {(2670.2,0) (2670.2,100)};
    % Hard 평균
    \addplot[YELLOW, dashed] coordinates {(4347.7,0) (4347.7,100)};
    % Expert 평균
    \addplot[RED, dashed] coordinates {(9482.3,0) (9482.3,100)};
    \end{axis}
\end{tikzpicture}
    \vspace{-0.7cm}
    \caption{Cost (PaT)}
    \vspace{-0.25cm}
    \label{fig:xCodeEval_PaT}
    \end{subfigure}
\caption{\textbf{Adaptive decomposition probability and cost analysis for Qwen3$_\text{4B}$ on xCodeEval.} (a) Decomposition rate of FunCoder and PaT by problem difficulty. Per-difficulty cost distribution (solid line) and average cost (vertical dashed line) on (b) FunCoder and (c) PaT.}
\label{fig:xCodeEval_analysis}
\vspace{-0.5cm}
\end{figure*}
% Replanning 그림
% detail 한 feedback loop
\begin{figure}[!ht]
\centering
    \begin{subfigure}{0.45\textwidth}
        \centering
        \input{prompts/decomposition_FunCoder}
        \vspace{0.1cm}
        \caption{Decomposition prompt for FunCoder}
        \label{fig:decomposition_funcoder}
    \end{subfigure}
    \vspace{0.1cm}
    \begin{subfigure}{0.45\textwidth}
        \centering
        \input{prompts/decomposition_PaT}
        \vspace{0.1cm}
        \caption{Planning prompt for PaT}
        \label{fig:decomposition_adc}
    \end{subfigure}
    \vspace{-0.3cm}
    \caption{\textbf{Comparison of planning prompts.} Excerpts from planning prompts for (a) FunCoder and (b) PaT.}
    \label{fig:decomposition_prompt}
    \vspace{-0.5cm}
\end{figure}
\subsubsection{Challenging Benchmark}
\label{sec:complex}
On the challenging xCodeEval benchmark, PaT again achieves higher performance than all baselines across all model variants.
However, we observe an interesting cost dynamic: for smaller models like Qwen3$_\text{4B}$ and Llama3.1$_\text{8B}$, PaT incurs a higher cost than FunCoder.
This is not a sign of inefficiency but a direct consequence of PaT's adaptive strategy.
Less capable models fail more frequently on xCodeEval's difficult problems, and PaT correctly interprets these failures as signals to invest in decomposition.
This strategic escalation, while costly, is precisely what allows these smaller models to overcome challenges where rigid policies like FunCoder's fall short.

For a deeper analysis, we examine the behavior of Qwen3$_\text{4B}$ in Figure~\ref{fig:xCodeEval_analysis}, which reveals a fundamental limitation of pre-emptive planning.
While both PaT and FunCoder increase decomposition for harder problems, PaT exhibits a much more dynamic response compared to FunCoder's more restrained increase.
Remarkably, despite this aggressive decomposition, the cost difference remains marginal because FunCoder incurs a planning overhead on 100\% of problems, whereas PaT invokes the planner exclusively on failed instances.
This reinforces a key insight: planning is a high-cost resource that must be allocated reactively, relying on the signal of failure to distinguish necessary investment from wasteful overhead.

This behavioral divergence stems from the distinct prompt strategies shown in Figure~\ref{fig:decomposition_prompt}.
While FunCoder relies on the model's subjective assessment of complexity, PaT leverages a grounded feedback signal derived from trial failure to issue an imperative command.
This explicit directive proves far more effective at triggering necessary decomposition than a conditional instruction, ensuring the model does not underestimate the task's difficulty.

\subsection{Heterogeneous Setting}
\label{sec:hybrid-exp}

Our discussion in Section~\ref{sec:heterogeneous} established the potential for enhanced cost-efficiency in a heterogeneous model configuration.
To empirically validate this, we conduct experiments on the foundational benchmarks, pairing a powerful planner (Qwen3$_\text{32B}$) with a series of smaller generator models.
The results, presented in Table~\ref{table:hetero_table}, provide strong support for our analysis.
Pairing a Qwen3$_\text{8B}$ generator with a Qwen3$_\text{32B}$ planner achieves competitive performance (an average Pass@1 of $87.39\%$), with a <1\% gap to the homogeneous Qwen3$_\text{32B}$, while reducing the relative cost to just $0.31$.

The superior cost-benefit trade-off of the heterogeneous approach is visualized in Figure~\ref{fig:hetero_plot}.
In this graph, the slope of the curve represents the performance return on additional cost.
The heterogeneous model configurations exhibit a significantly steeper slope, demonstrating that upgrading only the planner is a highly capital-efficient strategy, yielding substantial performance gains for a marginal increase in cost.
This confirms our central hypothesis: because PaT invokes the planner infrequently, reserving a powerful model for this critical but rare task is the most cost-effective way to enhance the overall system's capability.

\begin{table}[!t]
\centering
\small
\begin{tabular}{l|lrr}
    \toprule
    \textbf{Generator} & \textbf{Planner} & \textbf{Avg.} & \textbf{Cost} \\
    \midrule 
    Qwen3$_\text{32B}$ & \cellcolor{col!25} Qwen3$_\text{32B}$ & \cellcolor{col!25} 88.37 & \cellcolor{col!25} 1.00 \\ 
    \cmidrule(l){1-4}
    \multirow{2}{*}{Qwen3$_\text{14B}$} & \cellcolor{col!25} Qwen3$_\text{14B}$ & \cellcolor{col!25} 86.18 & \cellcolor{col!25} 0.47 \\
    & \cellcolor{col2!25} Qwen3$_\text{32B}$ & \cellcolor{col2!25} 87.53 & \cellcolor{col2!25} 0.49 \\
    \cmidrule(l){1-4}
    \multirow{2}{*}{Qwen3$_\text{8B}$} & \cellcolor{col!25} Qwen3$_\text{8B}$ & \cellcolor{col!25} 85.58 & \cellcolor{col!25} 0.25 \\
    & \cellcolor{col2!25} Qwen3$_\text{32B}$ & \cellcolor{col2!25} 87.39 & \cellcolor{col2!25} 0.31 \\
    \cmidrule(l){1-4}
    \multirow{2}{*}{Qwen3$_\text{4B}$} &\cellcolor{col!25} Qwen3$_\text{4B}$ & \cellcolor{col!25} 83.13 & \cellcolor{col!25} 0.14\\ 
    & \cellcolor{col2!25} Qwen3$_\text{32B}$ & \cellcolor{col2!25} 84.78 & \cellcolor{col2!25} 0.18 \\
    \bottomrule
\end{tabular}
\caption{\textbf{Performance (Pass@1) and relative cost results} for homogeneous and heterogeneous PaT configurations. The cost is normalized relative to the homogeneous Qwen3$_\text{32B}$ model, which is set to $1.00$.}
\vspace{-0.2cm}
\label{table:hetero_table}
\end{table}
\begin{figure}[!t]
\centering
\begin{tikzpicture} % TikZ 그림 시작
    \begin{axis}[ % PGFPlots 축 설정 시작
        % 축 라벨
        xlabel={Cost},
        ylabel={Pass@1 (\%)}, 
        ylabel style={yshift=-5pt},
        xlabel style={yshift=5pt},
        % 라벨 및 틱 라벨 폰트 크기
        label style={font=\footnotesize},
        tick label style={font=\scriptsize},
        % --- 축 범위 설정 ---
        xmin=0.0, xmax=1.1, 
        ymin=82, ymax=90, 
        xtick={0.2, 0.6, 1.0}, 
        ytick={82, 86, 90},
        % 그리드 설정
        grid=both, 
        major grid style={gray!50!white, line width=0.5pt}, 
        minor grid style={gray!20!white, line width=0.2pt}, 
        ymajorgrids, 
        xmajorgrids, 
        % 그래프 크기
        width=0.45\textwidth, % <--- 전체 텍스트 너비에 맞게 조정 (0.7은 예시)
        height=5cm,
        % 범례 스타일 설정
        legend style={at={(0.95, 0.05)}, anchor=south east, font=\footnotesize},
    ]
        \addplot[
            color=col,
            mark=square*, 
            line width=1pt,
            solid, 
            nodes near coords, 
            point meta=explicit symbolic, 
            every node near coord/.append style={font=\scriptsize,xshift=0.0em,yshift=-1.5em} 
        ]
        coordinates {
            (0.14, 83.13) [\;\;\;\;\;\;\;\;\;(4B, 4B)] 
            (0.25, 85.58) [\;\;\;\;\;\;\;\;\;\;\;(8B, 8B)] 
            (0.47, 86.18) [\;\;\;\;\;(14B, 14B)] 
            (1.00, 88.37) [\!\!(32B, 32B)]
        };
        \addlegendentry{Homogeneous};
    
        \addplot[
            color=col2,
            mark=star,
            mark indices={2, 4, 6},
            dashed,
            line width=1pt,
            nodes near coords,
            point meta=explicit symbolic,
            every node near coord/.append style={font=\scriptsize,xshift=0.0em,yshift=-0.1em}
        ]
        coordinates {
            (0.14, 83.13)
            (0.18, 84.78) [\!\!\!\!\!\!\!\!\!\!\!\!\!\!(4B, 32B)]
            
            (0.25, 85.58)
            (0.31, 87.39) [\!\!\!\!(8B, 32B)]
            
            (0.47, 86.18)
            (0.49, 87.53) [\;\;\;(14B, 32B)]
        };
        \addlegendentry{Heterogeneous}; % 캡션이 명확하므로 범례 텍스트를 간결하게 수정
        % \node[font=\scriptsize, color=col2, anchor=north west, xshift=1pt, yshift=-1pt] at (axis cs: 0.18, 84.78) {(32B, 4B)};
    \end{axis}
\end{tikzpicture}
\vspace{-0.3cm}
\caption{\textbf{Trade-off curve for heterogeneous configurations.} Corresponding to Table~\ref{table:hetero_table}, this plot visualizes the Pass@1 performance vs. relative cost. Labels denote (Generator, Planner) sizes.
Employing a powerful Qwen3$_\text{32B}$ planner (dashed line) yields a significant performance gain for a marginal increase in cost compared to Homogeneous configurations (solid line).}
\vspace{-0.5cm}
\label{fig:hetero_plot}
\end{figure}

\section{Conclusion}
\label{sec:conclusion}
In this work, we introduced Planning-after-Trial (PaT), an adaptive policy that addresses the critical challenge of high inference costs in LLM-based code generation.
By inverting the conventional Planning-before-Trial (PbT) policy, PaT avoids unnecessary planning overhead on simple problems and strategically allocates computational resources only when a direct attempt fails.
We demonstrated that this adaptive nature is uniquely synergistic with a heterogeneous configuration, allowing for the decoupling of planning and generation costs.
Comprehensive experiments confirmed that PaT outperforms existing state-of-the-art baselines in homogeneous settings across diverse models and benchmarks.
Furthermore, we validated that our heterogeneous model configuration establishes a superior cost-performance frontier compared to homogeneous setups.
By enabling principled test-time computation scaling, PaT provides a practical and effective framework for building more scalable and cost-efficient code generation systems.

\section*{Limitations}
Relying on self-generated test cases inherently introduces noise and potential inefficiency arising from unnecessary planning for correctly solved problems.
While integrating emerging high-fidelity generation methods \citep{wang2025testeval, prasad2025learning} offers synergistic potential, achieving a perfectly noise-free oracle remains unlikely; thus, robust mechanisms like our plateau heuristic remain essential for practical deployment.
Regarding computational resources, our heterogeneous configuration entails a higher static memory footprint.
However, this design aligns with efficient architectures like Mixture-of-Experts (MoE) \citep{jiang2024mixtral}, prioritizing the significant reduction of average dynamic inference cost and user-facing latency for the majority of tasks.

\section*{Acknowledgements}
This work was supported by Institute of Information \& Communications Technology Planning \& Evaluation (IITP) grants funded by the Korea government (MSIT) (RS-2019-II191906, Artificial Intelligence Graduate School Program (POSTECH); RS-2024-00457882, AI Research Hub Project; IITP-2026-RS-2024-00437866, Information Technology Research Center (ITRC); RS-2026-25511821, Development of Personalized Media Service Recommendation and Generative Technology). It was also supported by the MSIT under the Global Research Support Program in the Digital Field (RS-2024-00436680) supervised by the IITP. This project is supported by Microsoft Research Asia.

\section*{AI writing assistance}
We utilized large language models solely for the purpose of refining the clarity, grammar, and style of the manuscript under the scope of ``Assistance purely with the language of the paper.''
All scientific claims, experimental designs, and empirical results presented in this paper are the original work of the authors.

\bibliography{acl2026}

\appendix

% \section{Pseudo Code}
% \label{sec:app_pseudo}
% \input{iclr2026/algorithms/algorithm}
% \onecolumn
% \newpage

\section{Theoretical Analysis}
\label{sec:app_theory}

In this section, we provide a formal theoretical analysis supporting the cost-efficiency of the Heterogeneous Model Configuration in Section~\ref{sec:heterogeneous}.
We first establish the notations and cost model, then prove the existence of an efficient heterogeneous policy (Theorem~\ref{thm:exist_ps}).
We subsequently extend our analysis to consider the asymptotic cost for problems of large complexity $k$ and derive the optimal cost for the generator sLM under scaling laws.

\paragraph{Preliminaries and Notations.}
We denote the problem complexity by $k > 0$.
We consider a set of models $M \in \{L, s\}$, where $L$ represents a large, capable model (used as the Planner) and $s$ represents a smaller, cost-efficient model (used as the Generator).
For each model $M$, let $p_M$ denote its problem-solving capability (the maximum complexity it can solve in a single attempt) and $c_M$ denote its unit inference cost.
Consistent with the heterogeneous setting, we assume $0 < p_s < p_L$ and $0 < c_s < c_L$.

Based on these notations, we define the core assumptions that characterize our cost model.

\begin{assumption}[Generation cost] 
\label{ass:gen_cost}
A generation call by $M_G$ on a problem of complexity $k$ incurs
\begin{align}
\text{Cost}_\text{Generation}=
\begin{cases}
k\,c_{M_G}, \!\!\!\!& k \le p_{M_G}  \;\text{(success)},\\[2pt]
p_{M_G}\,c_{M_G}, \!\!\!\!& k > p_{M_G} \;\text{(failed)}.
\end{cases}
\end{align}
\end{assumption}

\begin{assumption}[Planning cost]
\label{ass:plan_cost}
A Planner-produced $n$-way plan maps a problem of size $k$ to $n$ independent subproblems of size $\frac{k}{n}$ and incurs a fixed per-subproblem overhead $D_{M_P}$, so that
\begin{align}
\text{Cost}_\text{Planning} \;=\; nD_{M_P}.
\end{align}
\end{assumption}

To formally connect a model's capability $p_M$ and its unit cost $c_M$, we adopt a standard assumption from the literature on neural scaling laws.
\begin{assumption}[Scaling law]
\label{ass:scaling_law}
We assume that a model's capability $p_M$ and its unit cost $c_M$ are related by a power-law scaling relation, consistent with prior work \citep{kaplan2020scaling}:
\begin{align}
\label{eq:scaling-law}
p_M=\alpha\,c_M^{\beta},\qquad \alpha>0,\ 1 \ge \beta> 0.
\end{align}
The constraint $0 < \beta \le 1$ reflects the principle of diminishing returns.
\end{assumption}

With these definitions established, we now formally state the condition under which the heterogeneous policy outperforms the homogeneous one.

\begin{theorem}[Existence of an efficient sLM capability]
\label{thm:exist_ps}
Let $k \sim \mathrm{Unif}(0,p_L]$.
Under Assumptions~\ref{ass:gen_cost}, \ref{ass:plan_cost}, and \ref{ass:scaling_law}, if the total planning overhead satisfies
\begin{align}
n D_L \;<\; \Big(\tfrac{1}{2}-\tfrac{1}{n^{2}}\Big)\, p_L c_L,
\end{align}
then there exists $p_s \in (0,p_L)$ for which PaT with a Heterogeneous Model Configuration has strictly lower expected cost than an LLM-only policy.
\end{theorem}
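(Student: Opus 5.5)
The plan is to compute both expected costs in closed form, pick one explicit value of $p_s$, and compare. Throughout I take $n\ge 2$; for $n=1$ the right-hand side of the hypothesis is negative, so the hypothesis cannot hold.

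\textbf{LLM-only baseline.} Under Assumption~\ref{ass:gen_cost}, with $M_G=L$ and $k\le p_L$, every call succeeds at cost $k\,c_L$. Hence
\[
\mathbb{E}[\mathrm{Cost}_{L}] = \tfrac12 p_L c_L .
\]

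\textbf{Choice of $p_s$.} I choose $p_s = p_L/n \in (0,p_L)$. This makes every subproblem produced by one $n$-way plan have size $k/n \le p_L/n = p_s$. So after at most one planning step, all subproblems are solved directly by the generator and the PaT recursion in Algorithm~\ref{alg:pat} has depth one.

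\textbf{PaT cost by cases.} I split on whether the first trial succeeds.
\begin{itemize}
\item If $k\le p_s$, the trial succeeds at cost $k c_s$.
\item If $k>p_s$, the trial fails at cost $p_s c_s$. The planner $L$ is then invoked at cost $nD_L$ by Assumption~\ref{ass:plan_cost}. The $n$ subproblems are solved at total cost $n\cdot \tfrac{k}{n} c_s = k c_s$.
\end{itemize}
Since $\Pr(k>p_s)=1-\tfrac1n$, the expected cost is
\[
\mathbb{E}[\mathrm{Cost}_{\mathrm{PaT}}] = \tfrac12 p_L c_s + \Big(1-\tfrac1n\Big)\big(p_s c_s + nD_L\big).
\]

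\textbf{Bounding $c_s$.} I invoke Assumption~\ref{ass:scaling_law}. It gives $c_s = c_L\,(p_s/p_L)^{1/\beta} = c_L\, n^{-1/\beta}$. Because $1/\beta\ge 1$ and $n\ge2$, this yields $c_s\le c_L/n$.

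\textbf{Combining.} Substituting $c_s\le c_L/n$ and $p_s=p_L/n$ gives
\[
\mathbb{E}[\mathrm{Cost}_{\mathrm{PaT}}] \le p_L c_L\Big(\tfrac1{2n} + \big(1-\tfrac1n\big)\tfrac1{n^2}\Big) + \Big(1-\tfrac1n\Big) nD_L .
\]
Now apply the hypothesis $nD_L<(\tfrac12-\tfrac1{n^2})p_Lc_L$, multiplied by the positive factor $1-\tfrac1n$ (the strict inequality is preserved). The bracket collapses:
\[
\tfrac1{2n} + \Big(1-\tfrac1n\Big)\Big(\tfrac1{n^2}+\tfrac12-\tfrac1{n^2}\Big) = \tfrac1{2n}+\tfrac12-\tfrac1{2n} = \tfrac12 .
\]
Therefore $\mathbb{E}[\mathrm{Cost}_{\mathrm{PaT}}] < \tfrac12 p_L c_L = \mathbb{E}[\mathrm{Cost}_L]$, which is the claim.

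\textbf{Main obstacle.} The delicate point is the bookkeeping, specifically keeping the factor $(1-\tfrac1n)$ on the planning term. If one bounds the planning cost by $nD_L$ without it, the hypothesis is too weak to close the inequality. The choice $p_s=p_L/n$ is made so that the recursion terminates after one level and the $1/n^2$ terms cancel exactly against the $\tfrac1{n^2}$ in the hypothesis.
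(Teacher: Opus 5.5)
Your proof is correct and follows the paper's argument. It makes the same choice $p_s = p_L/n$, the same case split on whether the first trial succeeds, and arrives at the same expected cost $\tfrac12 p_L c_s + (1-\tfrac1n)(p_s c_s + nD_L)$ with $c_s = n^{-1/\beta}c_L$. The paper derives a general-$\beta$ condition and then specializes to $\beta=1$ to get the stated hypothesis; your bound $n^{-1/\beta}\le n^{-1}$ additionally makes explicit why the $\beta=1$ hypothesis suffices for every $\beta\in(0,1]$, which the paper leaves implicit.
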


\begin{proof}
First, for the homogeneous LLM-only policy, the cost for any problem $k \le p_L$ is $kc_L$.
The expected cost is therefore:
\begin{align}
    &\mathbb{E}[\text{Cost}_\text{Homogeneous}] \notag\\&= \frac{1}{p_L} \int_0^{p_L} k c_L \;dk = \frac{1}{p_L} \frac{p_L^2c_L}{2} = \frac{p_Lc_L}{2}
\end{align}

Next, for the Heterogeneous policy, we consider two cases based on the capability of the small model, $p_s$.
If $p_s\ge k$, the generator $s$ succeeds, incurring a cost of $kc_s$.
Else, \ie $p_s < k$, the generator $s$ fails (cost $p_s c_s$), the planner $L$ is invoked (cost $nD_L$), and then the generator $s$ solves the $n$ subproblems of size $\frac{k}{n}$. 
Consider $p_s \ge \frac{p_L}{n}$, we can calculate the cost in this case $p_s c_s + nD_L + k c_s$.
The expected cost is the sum of the integrals over these two ranges, divided by $p_L$:
\begin{align}
    &\mathbb{E}[\text{Cost}_\text{Heterogeneous}] \notag\\
    &= \frac{1}{p_L} \biggl(\int_0^{p_s} \!\!k c_s \;dk+\!\int_{p_s}^{p_L} \!\!p_s c_s +\! nD_L +\! k c_s \; dk \biggr)\\
    &= \frac{1}{p_L} \biggl(\frac{p_s^2c_s}{2} + p_sc_s (p_L-p_s)+ nD_L (p_L-p_s)\notag 
    \\& \;\;\;\;\;\;\;\;\;\;\;\;\;+\frac{(p_L^2-p_s^2)c_s}{2} \biggr)\\
    &= \frac{p_{L} c_{s}}{2} + \frac{p_{L}-p_{s}}{p_{L}} \left(p_s c_s + nD_L\right) \label{eq:hetero1}
\end{align}

To analyze a concrete scenario, let's assume we can choose an sLM such that its capability is a fraction of the LLM's, \ie $p_s = \frac{p_L}{n}$.
Using the scaling law from Assumption~\ref{ass:scaling_law} ($p_M = \alpha c_M^\beta$), we can relate the costs: $c_s = \left(\frac{p_s}{\alpha}\right)^{1/\beta}=\left(\frac{p_L}{n\alpha}\right)^{1/\beta}=n^{-1/\beta}c_L$.

Substituting these into the heterogeneous cost \eqref{eq:hetero1}:
\begin{align}
    &\mathbb{E}[\text{Cost}_\text{Heterogeneous}] \\
    &= \frac{p_{L} c_{s}}{2} + \frac{p_{L}-p_{s}}{p_{L}} \left(p_s c_s + nD_L\right) \\
    &=\frac{p_Ln^{-1/\beta}c_L}{2} + \frac{n-1}{n}\left(\frac{p_L}{n}n^{-1/\beta}c_L+nD_L\right)\\
    &=\left(\frac{1}{2}+\frac{n-1}{n^2}\right)n^{-1/\beta}p_Lc_L + (n-1)D_L
\end{align}

For the heterogeneous policy to be strictly more efficient, the difference must be positive:
\begin{align}
    0 &> \mathbb{E}[\text{Cost}_{\text{Heterogeneous}} - \text{Cost}_{\text{Homogeneous}}] \\
    &=\!(n-1)D_L \!+ \!\left(\!\frac{1}{2} \!+\! \frac{n-1}{n^2}\!\right)n^{-\frac{1}{\beta}}p_Lc_L \!-\! \frac{p_{L}c_{L}}{2}.
\end{align}

Rearranging to solve for the planning overhead $nD$ gives the condition stated in the theorem:
\begin{align}
    nD_L &< \frac{n}{n-1}\!\left(\!\frac{p_{L}c_{L}}{2} \!-\! \left(\!\frac{1}{2} \!+\! \frac{n-1}{n^2}\!\right)n^{-\frac{1}{\beta}}p_L c_L\!\right)\! \\&= \left(\frac{n-n^{1-\frac{1}{\beta}}}{2(n-1)}-n^{-1-\frac{1}{\beta}}\right) p_{L} c_{L}.
\end{align}

As a specific intuitive case, if we consider linear scaling where $\beta = 1$, the condition simplifies to:
\begin{align}
    nD_L < \left(\frac{1}{2} - \frac{1}{n^2}\right) p_L c_L
\end{align}

This shows that as long as the total planning cost is less than the savings achieved by the heterogeneous configuration, a more efficient sLM exists.
\end{proof}

\begin{theorem}[Asymptotic efficiency of the heterogeneous configuration]
\label{theorem:hybrid}
For any sufficiently complex task $k$, the Heterogeneous strategy is asymptotically more cost-efficient than the Homogeneous strategy, provided that the cost of decomposition satisfies the following condition:
\begin{align}
D_L < \frac{c_{L} - c_{s}}{\frac{1}{p_{s}} - \frac{1}{p_{L}}}.
\end{align}
\end{theorem}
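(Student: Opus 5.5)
The plan is to compute, for a single task of large complexity $k$, the cost of each configuration under Assumptions~\ref{ass:gen_cost} and~\ref{ass:plan_cost}. I would keep only the terms linear in $k$ and compare the slopes. First observe that the claimed condition is equivalent, after multiplying out, to
\begin{align}
c_s + \frac{D_L}{p_s} \;<\; c_L + \frac{D_L}{p_L}.
\end{align}
So it suffices to show that each strategy's cost is $k\,(c_{M_G} + D_{M_P}/p_{M_G}) + O(1)$ as $k\to\infty$. Here $M_G$ is the generator and $M_P$ the planner, so the heterogeneous strategy uses $(M_G,M_P)=(s,L)$ and the homogeneous one uses $(L,L)$.

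\textbf{Step 1 (cost of one strategy).} Fix generator $G$ and planner $P$, and take $k > p_G$. The initial trial fails, costing $p_G c_G$ by Assumption~\ref{ass:gen_cost}. I would let the planner choose the coarsest plan whose pieces are all solvable, namely $n = \lceil k/p_G \rceil$, so that every subproblem has size $k/n \le p_G$. By Assumption~\ref{ass:plan_cost} this plan costs $nD_P$. Each of the $n$ subproblems is then solved on the first trial at cost $(k/n)c_G$, for a total of $k c_G$. Summing,
\begin{align}
\text{Cost}(k) = p_G c_G + \Big\lceil \tfrac{k}{p_G}\Big\rceil D_P + k c_G = k\Big(c_G + \tfrac{D_P}{p_G}\Big) + O(1).
\end{align}
The $O(1)$ term is bounded by $p_G c_G + D_P$, uniformly in $k$.

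\textbf{Step 2 (comparison).} Specialize Step 1 to $(s,L)$ and to $(L,L)$ and divide both costs by $k$. The ratio $\text{Cost}_{\text{Het}}(k)/\text{Cost}_{\text{Hom}}(k)$ converges to
\begin{align}
\frac{c_s + D_L/p_s}{c_L + D_L/p_L}.
\end{align}
This limit is strictly less than $1$ exactly under the displayed condition. The absolute difference $\text{Cost}_{\text{Hom}}(k)-\text{Cost}_{\text{Het}}(k)$ then grows linearly in $k$ and is positive for all sufficiently large $k$, which is the asymptotic claim. Note that $1/p_s - 1/p_L > 0$ because $p_s<p_L$, so rearranging the inequality preserves its direction.

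\textbf{Main obstacle.} The main difficulty is justifying the planning model rather than doing the algebra. If one insists on a fixed branching factor $n$ with recursive PaT, as in Algorithm~\ref{alg:pat}, the recursion tree has about $k/(p_G(n-1))$ internal nodes. Each internal node incurs a failed trial costing $p_G c_G$ plus a planning cost $nD_P$, and this adds an extra linear term proportional to $c_G/(n-1)$ that would perturb the threshold. I would first argue, as above, that the planner may select $n$ adaptively (one-shot decomposition into pieces of size at most $p_G$), which gives exactly the stated bound. If a fixed $n$ is required, I would carry out the geometric-sum count of internal nodes and either absorb the extra terms by taking $n$ large or state the slightly modified threshold.
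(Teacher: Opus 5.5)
Your proof is correct, but it takes a different route from the paper's. The paper fixes the branching factor $n$ and lets PaT recurse. It estimates about $\frac{k}{p_M(n-1)}$ internal nodes in the decomposition tree, each paying a failed trial $p_Mc_M$ and a plan $nD_L$, plus $kc_M$ at the leaves. This totals $\frac{nk}{n-1}\bigl(c_M+\frac{D_L}{p_M}\bigr)$, and the common factor $\frac{nk}{n-1}$ cancels between the two configurations.

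You instead let the planner choose $n=\lceil k/p_G\rceil$ in one shot. Only one failed trial is paid, and the cost is $k\bigl(c_G+\frac{D_P}{p_G}\bigr)$ plus a remainder in $[p_Gc_G,\,p_Gc_G+D_P]$. Assumption~\ref{ass:plan_cost} does not pin down $n$, so this is legitimate.

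Your version is actually the more rigorous of the two. The paper's count relies on $n^{h_M}\approx k/p_M$; its $h_M$ should be $\lceil\log_n(k/p_M)\rceil$, not $\lceil k/p_M\rceil$. With exact rounding, the ratio $n^{h_M}p_M/k$ oscillates in $[1,n)$, differently for $s$ and $L$, so the fixed-$n$ cost per unit $k$ does not converge. What the paper's version buys is a model closer to the recursive re-invocation of PaT on subproblems in Algorithm~\ref{alg:pat}, using the same constant $n$ as in Theorem~\ref{thm:exist_ps}.

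Your ``main obstacle'' paragraph misdiagnoses the fixed-$n$ case, though. The extra failure cost $\frac{kc_G}{n-1}$ does not shift the threshold, because in that model the planning cost carries the same factor. Generation totals $kc_G+\frac{kc_G}{n-1}=\frac{n}{n-1}kc_G$, while planning totals $\frac{k}{p_G(n-1)}\cdot nD_P=\frac{n}{n-1}\cdot\frac{kD_P}{p_G}$. Both slopes are scaled by $\frac{n}{n-1}$, which cancels in the comparison. So no limit $n\to\infty$ or modified threshold is needed. The only genuine imprecision in the fixed-$n$ route is the depth rounding noted above.
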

\begin{proof}
We prove the theorem by analyzing the asymptotic cost of each strategy for a large problem of complexity $k$.
Let $h_M = \lceil k/p_{M}\rceil$ be the number of recursive division levels for a model $M$.
The total number of division operations is $1+ n + n^2 +... + n^{h_M-1} = \frac{n^{h_M}-1}{n-1}$.
For a large $k$, we can approximate it as $\frac{n^{h_M}-1}{n-1} \approx \frac{k}{p_M(n-1)}$. 

The total cost for a model $M$ is the sum of three components: the cumulative cost of failures at each division step $\frac{k}{p_M(n-1)}p_Mc_M$, the cumulative cost of decomposition $\frac{k}{p_M(n-1)}nD_L$, and the final cost of conquering the sub-problems $kc_M$.
Summing these components gives the total asymptotic cost: 
\begin{align}
\frac{nk}{n-1}\left(c_M + \frac{D_L}{p_M}\right).
\end{align}

For the heterogeneous setting to be more cost-efficient than the homogeneous one, it requires:
\begin{align}
\frac{nk}{n-1}\left(c_{s} + \frac{D_L}{p_{s}}\right) < \frac{nk}{n-1}\left(c_{L} + \frac{D_L}{p_{L}}\right).
\end{align}

The $\frac{nk}{n-1}$ term cancels.
Rearranging the remaining terms to solve for $D_L$ yields the condition stated in the theorem:
\begin{align}
D_L < \frac{c_{L} - c_{s}}{\frac{1}{p_{s}} - \frac{1}{p_{L}}}.
\end{align}

\end{proof}

\begin{table*}[!t]
\centering
\small
\begin{tabular}{L{1.5cm}L{1.5cm}R{0.8cm}R{0.8cm}R{0.8cm}R{0.8cm}R{0.8cm}R{1.0cm}R{0.8cm}}
\toprule
\multirow{2}{*}{\textbf{Generator}} & \multirow{2}{*}{\textbf{Planner}} & \multicolumn{2}{c}{\textbf{HumanEval}} & \multicolumn{2}{c}{\textbf{MBPP}} & \multirow{2}{*}{\textbf{Avg.}} & \multirow{2}{*}{$\Delta$ \textbf{Avg.}}& \multirow{2}{*}{\textbf{cost}}
\\
\cmidrule(lr){3-4} \cmidrule(lr){5-6}
& & \multicolumn{1}{c}{base} & \multicolumn{1}{c}{plus} & \multicolumn{1}{c}{base} & \multicolumn{1}{c}{plus} & & & 
\\
\midrule 
Qwen3$_\text{32B}$ & Qwen3$_\text{32B}$ & 93.90 & 84.15 & 88.57 & 86.86 & 88.37 & - & 1.00 \\
\cmidrule(lr){1-9}
\multirow{2}{*}{Qwen3$_\text{14B}$} & Qwen3$_\text{14B}$ & 91.46 & 83.54 & 85.14 & 82.29 & 86.18 & - 2.19 & 0.47 \\
 & Qwen3$_\text{32B}$ & 93.29 & 85.98 & 86.29 & 84.57 & 87.53 & - 0.84 & 0.49 \\
\cmidrule(lr){1-9}
\multirow{2}{*}{Qwen3$_\text{8B}$} & Qwen3$_\text{8B}$ & 90.85 & 82.32 & 85.14 & 84.00 & 85.58 & - 2.79 & 0.25 \\
 & Qwen3$_\text{32B}$ & 93.90 & 85.37 & 86.29 & 84.00 & 87.39 & - 0.98 & 0.31 \\
\cmidrule(lr){1-9}
\multirow{2}{*}{Qwen3$_\text{4B}$} & Qwen3$_\text{4B}$ & 89.63 & 82.32 & 82.29 & 78.29 & 83.13 & - 5.24 & 0.14 \\
& Qwen3$_\text{32B}$ & 92.07 & 84.76 & 82.29 & 80.00 & 84.78 & - 3.40 & 0.18 \\
\bottomrule
\end{tabular}
\caption{\textbf{Full results on heterogeneous setting.} We report the average cost relative to Qwen3$_\text{32B}$ is normalized to 1.00.}
\label{table:Hybrid}
\end{table*}

This theorem provides a theoretical basis for the efficiency of heterogeneous model configuration.
It indicates that the heterogeneous configuration becomes more cost-efficient when the cost of decomposition ($D_L$) is less than the savings generated by executing the sub-problems with a more cost-effective model ($s$).
This provides a formal rationale for using the heterogeneous configuration and shows that the PaT policy is a structured approach to allocating computational resources.

\begin{theorem}[Optimal Generator cost under scaling laws]
\label{theorem:optimal_slm}
The cost of the optimal small model, $c_{s}^*$, that minimizes the asymptotic cost of the heterogeneous configuration is given by the following closed-form solution:
\begin{align}
c_{s}^* = \min\left\{\left(\frac{\beta \cdot D_L}{\alpha}\right)^\frac{1}{\beta+1}, c_L\right\}
\end{align}
\end{theorem}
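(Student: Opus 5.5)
The plan is to start from the asymptotic cost expression derived in the proof of Theorem~\ref{theorem:hybrid} and reduce the claim to a one-variable convex minimization. For a model $M$ acting as generator, with the planner $L$ fixed, that proof gives the total asymptotic cost
\begin{align}
\frac{nk}{n-1}\left(c_M + \frac{D_L}{p_M}\right).
\end{align}
For the heterogeneous configuration the generator is $s$. The prefactor $\frac{nk}{n-1}$ is positive and does not depend on the generator, so minimizing the asymptotic cost over the choice of sLM is the same as minimizing $c_s + D_L/p_s$. Next I would use Assumption~\ref{ass:scaling_law} to write $p_s = \alpha c_s^{\beta}$. This removes $p_s$ and leaves the objective
\begin{align}
f(c) = c + \frac{D_L}{\alpha}\, c^{-\beta}, \qquad c > 0,
\end{align}
with the feasibility constraint $c \le c_L$, since the generator cannot cost more than the planner model.

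The key step is the first-order analysis of $f$. Its derivative is $f'(c) = 1 - \frac{\beta D_L}{\alpha} c^{-\beta-1}$. Setting $f'(c)=0$ gives the unique stationary point
\begin{align}
\tilde c = \left(\frac{\beta D_L}{\alpha}\right)^{\frac{1}{\beta+1}}.
\end{align}
The second derivative is $f''(c) = \frac{\beta(\beta+1) D_L}{\alpha} c^{-\beta-2}$, which is positive for all $c>0$ because $\alpha, D_L > 0$ and $\beta \in (0,1]$. Hence $f$ is strictly convex on $(0,\infty)$. It follows that $f$ is strictly decreasing on $(0,\tilde c)$ and strictly increasing on $(\tilde c,\infty)$. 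Also $f(c) \to \infty$ as $c \to 0^+$, so no boundary minimum occurs at zero.

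The last step, and the one I expect to need the most care, is incorporating the constraint $c \le c_L$. This step is easy to gloss over, and it is also where one must make sure the boundary case is admissible; I would treat $s = L$ as the degenerate homogeneous case.
\begin{itemize}
\item If $\tilde c \le c_L$, the unconstrained minimizer is feasible, so it is optimal.
\item If $\tilde c > c_L$, then $f$ is strictly decreasing on all of $(0, c_L]$, so the constrained minimum is at the boundary $c_L$.
\end{itemize}
Combining the two cases gives
\begin{align}
c_s^* = \min\left\{\tilde c,\, c_L\right\},
\end{align}
which is the closed form in Theorem~\ref{theorem:optimal_slm}. In the second case the optimal choice is simply to use the large model as its own generator. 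I would also note, as a consistency check, that this case matches the failure of the condition in Theorem~\ref{theorem:hybrid}.
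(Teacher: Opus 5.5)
Your proposal is correct and follows the same route as the paper. You reduce the asymptotic cost to the coefficient $c_s + D_L/p_s$, substitute $p_s=\alpha c_s^{\beta}$, solve the first-order condition, and cap at $c_L$; your convexity argument ($f''>0$, so $f$ is decreasing on $(0,c_L]$ when $\tilde c>c_L$) justifies the minimality and capping steps that the paper's proof only asserts.
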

\begin{proof}
The asymptotic cost of the heterogeneous configuration is proportional to the coefficient $c_{s} + \frac{D_L}{p_{s}}$.
Substituting the scaling law gives $c_{s} + \frac{D_L}{\alpha c_{s}^\beta}$. 
To find the minimum, we take the derivative with respect to $c_{s}$ and set it to zero:
\begin{align}
    1-\frac{\beta D_L}{\alpha}c_{s}^{-\beta-1} = 0.
\end{align}
Solving for $c_{s}$ yields the unconstrained optimum.
The final solution is capped at $c_{L}$ to respect the problem's practical constraints.
\end{proof}

This theorem provides a practical, closed-form solution for the cost of the optimal generator model in the asymptotic regime.
The result serves as a powerful heuristic to approximately estimate the most cost-effective smaller model to use in real-world heterogeneous configurations.

\paragraph{Empirical Validation.}
To validate the practical utility of Theorem~\ref{theorem:optimal_slm}, we applied the formula to our HumanEval experimental data using Qwen3 models.
We modeled model capability $p$ against normalized input token costs $c \in \{0.11, 0.18, 0.35, 0.70\}$ using the scaling law $p = \alpha c^\beta$.
Fitting this to our observed data yielded $\alpha \approx 1722.2$ and $\beta \approx 0.12$, with an average planning cost $D_L \approx 1270.73$ derived from the heterogeneous (4B+32B) experiments.
Substituting these parameters into Theorem~\ref{theorem:optimal_slm} yields a theoretical optimal cost $c_s^* \approx 0.114$.
This value is remarkably close to the actual cost of the 4B model ($c=0.11$).
While our main experiments (Figure~\ref{fig:hetero_plot}) suggest the 8B model offers a strong performance-cost balance, the 4B model is indeed the strictly cost-optimal generator as predicted.
This alignment confirms that despite simplified assumptions, our theoretical model successfully captures the underlying cost dynamics and serves as a practical guideline for selecting the initial sLM size for hyperparameter search.

\section{Additional Results}
\label{sec:app_additional_experiments}
\subsection{Full version of Table~\ref{table:hetero_table}}
Table~\ref{table:Hybrid} presents the full results for our heterogeneous setting experiments across all four foundational benchmarks (HumanEval, HumanEval+, MBPP, and MBPP+).
In this configuration, we pair a series of smaller generator models with a powerful, fixed planner (Qwen3$_\text{32B}$).
The table provides the detailed Pass@1 scores and relative costs for each combination, which were aggregated and visualized in Figure~\ref{fig:hetero_plot} in the main body of the paper.

\subsection{Wall-Clock Latency Analysis}
To complement the token-cost analysis in the main paper, we 
measure end-to-end wall-clock latency in an isolated 
single-GPU environment to avoid interference from shared 
resources. Table~\ref{table:latency} reports the mean 
per-problem latency (in seconds) on HumanEval using Qwen3 
models.

\begin{table}[!t]
\centering
\small
\begin{tabular}{lrrr}
\toprule
\textbf{Model} & Standard & FunCoder & PaT \\
\midrule
Qwen3$_\text{4B}$ & 3.87 & 19.82 & 11.14 \\
Qwen3$_\text{32B}$ & 14.21 & 63.56 & 41.39 \\
\cmidrule(lr){1-4}
Qwen3$_{\text{4B}+\text{32B}}$ & - & - & 11.90 \\
\bottomrule
\end{tabular}
\caption{\textbf{Mean wall-clock latency per problem 
(seconds) on HumanEval.} Measured in an isolated 
environment using Qwen3 models. Qwen3$_{\text{4B}+\text{32B}}$ denotes the heterogeneous model configuration using a 4B generator and 32B planner.}
\label{table:latency}
\end{table}

PaT incurs roughly 3$\times$ overhead over Standard, while 
FunCoder incurs 5$\times$, confirming that PaT's 
adaptive policy reduces not only token cost but also 
real-world execution time. Notably, the heterogeneous 
configuration (4B+32B) achieves a latency of 11.90s, nearly 
identical to the 4B-only PaT (11.14s) and faster than the 
32B Standard baseline (14.21s). This demonstrates that 
reserving the larger model exclusively for planning keeps 
wall-clock time dominated by the cheaper generator.

\begin{table}[!t]
\centering
\small
\begin{tabular}{lrrrr}
\toprule
\textbf{Model} & Best-of-N & CodeT & FunCoder & PaT \\
\midrule
Qwen3$_\text{4B}$ & 0.38 & 0.03 & 0.70 & \textbf{1.84} \\
Qwen3$_\text{8B}$ & 0.08 & 0.23 & 0.73 & \textbf{1.99} \\
Qwen3$_\text{14B}$ & 0.50 & 0.28 & 0.64 & \textbf{1.63} \\
Qwen3$_\text{32B}$ & 0.91 & 0.33 & 0.54 & \textbf{1.23} \\
\cmidrule(lr){1-5}
Llama3.1$_\text{8B}$ & \textbf{2.10} & 1.06 & 0.90 & 1.89 \\
\cmidrule(lr){1-5}
DeepSeek-Coder\!\!\!\!\!\! & 0.04 & 0.11 & 0.51 & \textbf{0.91} \\
\cmidrule(lr){1-5}
Average & 0.67 & 0.34 & 0.67 & \textbf{1.58} \\
\bottomrule
\end{tabular}
\caption{\textbf{Compute ROI across models and methods 
on foundational benchmarks.} ROI is defined as 
$\Delta\text{Avg} / (\text{Cost} - 1)$, measuring 
Pass@1 improvement per unit of additional cost over 
Standard. Higher is better.}
\label{table:roi}
\end{table}

\subsection{Compute Return on Investment}
To provide a unified metric for the cost-performance 
trade-off, we define the compute return on investment (ROI) 
as the performance gain per unit of additional cost over 
Standard:
\begin{equation}
    \text{ROI} = \frac{\Delta\text{Avg}}{\text{Cost} - 1},
\end{equation}
where $\Delta\text{Avg}$ is the average Pass@1 improvement 
over Standard across benchmarks, and Cost is the normalized 
cost reported in Table~\ref{table:HumanEval} (Standard = 1). 
A higher ROI indicates a more efficient use of additional 
compute. Table~\ref{table:roi} reports the ROI for all 
methods.

PaT achieves an average ROI of 1.58, approximately 
2.4$\times$ higher than FunCoder (0.67) and 4.6$\times$ 
higher than CodeT (0.34). This confirms that PaT's adaptive 
policy consistently delivers superior returns on additional 
test-time investment across all model scales and families.

\section{Implementation Details}
\label{sec:app_implementation}
\paragraph{Generation.}
The trial phase of our PaT policy incorporates a Best-of-N strategy to maximize the chance of a direct success.
For each problem specification, the generator model $M_G$ produces 5 candidate solutions, using a temperature of 0.8 to encourage diverse outputs.
Each of these 5 candidates is then verified against the test set.
If any of the candidates passes all test cases, the process terminates successfully, and that solution is returned.
Only if all 5 candidates fail the verification step does the policy escalate to the planning phase.
The generation is retried up to 3 times if it fails to produce a parsable output.

\paragraph{Planning.}
If the trial fails, the planning phase is triggered.
The planner model $M_P$ is prompted to decompose the problem specification $x$ by rewriting the main function to call new unimplemented helper functions.
A single decomposition plan is generated with a temperature of 0.2.
From the resulting Python code block, we parse the new function signatures and docstrings, which become the specifications for the subproblems.
If the output is not a valid code block, this planning step is retried up to 3 times, and the maximum recursion depth is limited to 3 to prevent overly complex solutions.
Once all subproblems are recursively solved, their solutions are composed into a final program and verified against the original test suite for $x$.
If this composite solution still fails, PaT initiates a re-planning loop.
The planner is invoked again on the original problem $x$, but this time it is provided with the set of previously successful helper functions as additional context, enabling a more informed and cost-efficient re-planning attempt.

\paragraph{Test case generation \& verification.}
Our verification process requires a test suite $\mathcal{T}(x)$ for each problem $x$, which we construct in two stages.
First, we process any example test cases provided directly in the problem description.
Then, to ensure a comprehensive and robust evaluation, we augment the initial suite by prompting an LLM to generate additional test cases based on the problem description, a technique inspired by CodeT.
This test generation process is performed consistently with a temperature of $0.2$ and is retried up to $3$ times.

\section{Benchmark Details}
\label{sec:app_benchmark}
\paragraph{HumanEval} \citep{chen2021evaluating}
is a foundational dataset for evaluating the functional correctness of generated code.
It consists of 164 hand-written programming problems, each including a function signature, a detailed docstring, and a set of hidden unit tests for evaluation.
This benchmark is the standard for measuring the code generation capabilities of a model.

\paragraph{MBPP} \citep{chen2022program}
is a larger, crowd-sourced dataset.
A critical challenge with the original MBPP dataset is that the prompts contain the ground-truth test cases, which can cause label leakage, particularly for baselines that perform selection or refinement.
To ensure a fair comparison, we follow the setup of \cite{shinn2023reflexion}.
For our experiments, we use a representative subset of 175 problems sampled from the full dataset.

\paragraph{EvalPlus} \citep{evalplus}
ensures a rigorous and reliable evaluation by augmenting the standard test suites of both HumanEval and MBPP.
HumanEval and MBPP sometimes pass solutions that are functionally correct on the provided tests but fail on more subtle edge cases.
EvalPlus mitigates this risk by automatically generating a much larger and more comprehensive set of unit tests, providing a more robust measure of a candidate solution's true correctness.

\paragraph{xCodeEval} \citep{khan2023xcodeeval}
is more challenging, competition-level problems, which is sourced from the CodeForces platform.
For our experiments, we sample a subset of 500 problems, after first filtering out any instances with incomplete or invalid test cases.
A key feature of xCodeEval is its difficulty labels, which allow us to partition problems into four categories (Easy, Mid, Hard, and Expert) for a more fine-grained analysis of policy behavior.

\begin{table}[!ht]
\centering
\small
\begin{tabular}{lrr}
\toprule
\textbf{Model} & Input & Output \\
\midrule
Qwen3$_\text{4B}$ & 0.11 & 0.42 \\
Qwen3$_\text{8B}$ & 0.18 & 0.70 \\
Qwen3$_\text{14B}$ & 0.35 & 1.40 \\
Qwen3$_\text{32B}$ & 0.70 & 2.80 \\
\cmidrule(lr){1-3}
Llama3.1$_\text{8B}$ & 0.10 & 0.10 \\
\cmidrule(lr){1-3}
DeepSeek-Coder& 0.14 & 0.28 \\
\bottomrule
\end{tabular}
\caption{\textbf{Token pricing (USD per million tokens).} The values represent the cost for input and output, respectively.}
\label{table:LLM_cost}
\end{table}

\section{LLMs Details}
\label{sec:app_LLMs}
We selected a range of state-of-the-art open-source language models to evaluate our PaT policy across different architectures and scales.
All experiments were conducted using NVIDIA A 6000 GPUs, with models served via the vLLM framework at float16 precision.
Due to its memory size, the Qwen3$_\text{32B}$ model was run using 2-way tensor parallelism across two A6000 GPUs.
Table~\ref{table:LLM_cost} reports input and output token prices (USD per million tokens) collected from public provider listings\footnote{\url{https://artificialanalysis.ai} (accessed 2025-09-25).}.
Since there is no official price available for DeepSeek-Coder-V2-Lite-Instruct, we use the pricing policy of DeepSeek-Coder-V2 as a proxy.

\paragraph{Qwen3} \citep{yang2025qwen3}
As our primary model family, we use the Qwen 3 series in four sizes: 4B, 8B, 14B, and 32B.
For all experiments, we used the base, pre-trained versions of these models. Additionally, for all experiments with the Qwen3 model family, we prepended the /no\_think command to all prompts.

\paragraph{Llama-3.1} \citep{dubey2024llama}
To test the cross-family generalization of our approach, we use Llama-3.1-8B-Instruct. This is the instruction-tuned version of the Llama-3.1 8B model, optimized for following user commands and prompts.

\paragraph{DeepSeek-Coder-V2-Lite} \citep{zhu2024deepseek}
To evaluate on a model specifically fine-tuned for code generation, we use DeepSeek-Coder-V2-Lite-Instruct.
This is the instruction-tuned ``Lite'' version of the DeepSeek-Coder-V2 family, with approximately 16B parameters.

\begin{table*}[!ht]
\centering
\small
\begin{tabular}{l|ccccc}
\toprule
\textbf{Method} & \textbf{Standard} & \textbf{Best-of-N} & \textbf{CodeT} & \textbf{FunCoder} & \textbf{PaT}\\
\midrule
\rowcolor{col!20}\multicolumn{6}{c}{\textbf{Generation}}\\
\midrule
Samples (N)    &  1  & 5 & 11 & 1 + 10 & 5 \\
Temperature & 0.3 & 0.8 & 0.8 & 0.8 & 0.8 \\
Retries & 3 & 3 & 3 & 3 & 3 \\
\midrule
\rowcolor{col!20}\multicolumn{6}{c}{\textbf{Planning}}\\
\midrule
Samples & - & - & - & 1 & 1 \\
Temperature & - & - & - & 0.2 & 0.2 \\
Retries & - & - & - & 3 & 3 \\
\midrule
\rowcolor{col!20}\multicolumn{6}{c}{\textbf{Test case generation \& verification}}\\
\midrule
Benchmark-provided   & X & O & X & O & O \\
Generation  & X & X & O & O & O \\
Temperature & - & - & 0.2 & 0.2 & 0.2 \\
Retries & - & - & 3 & 3 & 3 \\
\bottomrule
\end{tabular}
\caption{\textbf{Baseline details.} Hyperparameter details for baselines and PaT.}
\end{table*}
\section{Baseline Details}
\label{sec:app_baseline}
\paragraph{Standard} \citep{gpt3}
represents the most direct approach to code generation, establishing the base capability of a model.
It performs a single, one-time generation attempt to produce the entire program from the problem specification.
For our code generation tasks, we follow a few-shot prompting protocol, providing the model with a small number of in-context examples to guide its output.
We generate a single candidate solution with a low temperature of $0.3$ to produce the most probable and deterministic result.
This baseline serves to measure the model's raw performance without any complex strategies like sampling, refinement, or decomposition.

\paragraph{Best-of-N} \citep{chen2021evaluating}
aims to improve performance by exploring a diverse set of potential solutions through sampling.
For each problem, it generates $N$ candidate programs using a high temperature to encourage variety.
To ensure a fair comparison with the trial phase of our PaT policy, we use the same sampling parameters: we set $N=5$ and use a temperature of $0.8$.
Each of the 5 candidates is then executed against the provided test suite, and the one that passes the most tests is selected as the final solution.
This method increases the probability of finding a correct solution at the cost of generating and evaluating multiple candidates.
Since MBPP does not have a test suite, we deterministically submit the first of the 5 generated candidates.

\paragraph{CodeT} \citep{chen2022codet}
is utilized from the implementation provided by the authors of FunCoder to ensure a faithful reproduction of their setup.
The process begins by sampling a pool of candidate solutions.
Following the hyperparameter settings of FunCoder, we used a larger sample size of $N=11$ with a temperature of $0.8$, which diverges from PaT and Best-of-N's $N=5$.
This is because CodeT's consensus-based ranking mechanism is highly dependent on a large and diverse candidate pool to function effectively; using a smaller sample size would artificially weaken this baseline.

\paragraph{FunCoder} \citep{chen2024divide}
serves as our primary baseline for the Planning-before-Trial (PbT) policy, and we use the official implementation to ensure a fair and accurate comparison.
The method operates as a rigid two-stage pipeline. 
First, a planner model decomposes the problem into a complete plan of helper functions using a temperature of $0.8$.
Only after this static plan is finalized does it proceed to the second stage, where it solves each subproblem using a consensus-based mechanism similar to CodeT, with $N=10$ and a temperature of $0.8$.
This rigid, plan-first approach, where the plan is fixed regardless of generation outcomes, provides a direct contrast to our adaptive PaT policy.

% \section{Analysis of Generated Test Cases}

% To assess the reliability of our self-verification mechanism, we quantitatively analyze generated test cases on HumanEval.
% PaT generates an average of $6.7$ test cases per problem, providing significantly broader coverage than standard public examples.
% Crucially, we examine the distribution of false positives, defined as incorrect test cases that fail valid solutions, to determine if verification noise is systemic.
% As illustrated in Figure~\ref{fig:test_noise_distribution}, the results confirm that the verification signal is robust for the vast majority of tasks.
% For instance, with Qwen3$_{\text{4B}}$, 63.4\% of test cases are completely error-free.
% Furthermore, instances of severe noise (3+ false positives) are concentrated in a small minority of problems.
% In these specific noisy instances, the pass rate inevitably saturates because the model cannot satisfy incorrect tests regardless of code correctness.
% This empirically justifies our plateau heuristic as a necessary early-stopping mechanism designed to prevent the generator from wasting computation by chasing these unattainable signals.

\section{Prompt Details}
\label{sec:app_prompt}
Our prompting strategy largely follows the one provided in the FunCoder \citep{chen2024divide} implementation to ensure a fair comparison. 
We use the same few-shot examples and sampling methods for both the generation (conquer) and planning (divide) phases.

% The single critical modification is in the decomposition prompt.
% The FunCoder prompt uses a conditional instruction, asking the planner to decompose a problem if it seems complex.
% In contrast, since our PaT policy only invokes the planner after a direct attempt has already failed, the problem's difficulty has been established.
% Therefore, our planner prompt is an unconditional command to decompose the problem.

\subsection{Prompt for Generation}
\begin{lstlisting}[style=pythonstyle3]
You are a programming copilot, you can solve a problem by writing Python functions. Your task is to:

  - For every turn, you need to write a Python function that returns the answer, based on current code (not code in chat history) and problem description.
  - Do not modify function name, arg names, docstring in given functions.
  - Consider reusing existing functions that are already implemented.
  - You can import libraries to better solve the problem.

<User>:

Current Code:

```python
\end{lstlisting}

\begin{lstlisting}[style=pythonstyle4]
def prime_factor(x: int) -> list:
    """get a list of prime factors of number $x$"""
    ret = []
    i = 1
    while i * i <= x:
        i += 1
        if x % i == 0 and is_prime(i):
            ret.append(i)
    return ret

def is_prime(x: int) -> bool:
    """determine $x$ is a prime number or not"""
    if x < 2:
        return False
    for i in range(2, int(x**0.5) + 1):
        if x % i == 0:
            return False
    return True

def get_common(a: list, b: list) -> list:
    """get common element in two list $a$ and $b$"""
    ret = []
    for item in a:
        if item in b:
            ret.append(item)
    return ret

def sum_common_factors(a: int, b: int) -> int:
    """Return the sum of all common prime factors of $a$ and $b$"""

    raise NotImplementedError()
\end{lstlisting}

\begin{lstlisting}[style=pythonstyle3]
'''

Let's think step by step and implement the following method `sum_common_factors` using existing functions to solve:
"Return the sum of all common prime factors of $a$ and $b$"

<Assistant>:

First, I need to get the prime factors of $a$ and $b$.
Second, I can use `for` loop to find common element in two factors list.
Finally, sum the common factor list and return the answer.
Here is the `sum_common_factors` function:

```python
\end{lstlisting}

\begin{lstlisting}[style=pythonstyle4]
def sum_common_factors(a: int, b: int) -> int:
    """Compute the sum of all common prime factors of $a$ and $b$"""
    factors_a = prime_factor(a)
    factors_b = prime_factor(b)
    common_factors = get_common(factors_a, factors_b)
    return sum(common_factors)
\end{lstlisting}

\begin{lstlisting}[style=pythonstyle3]
'''

<User>:

Current Code:

```python
{prev_code}
'''

Let's think step by step and implement the following method `{cur_func_name}` using existing functions to solve:
"{cur_func_doc}"
\end{lstlisting}

\subsection{Prompt for Planning}
\begin{lstlisting}[style=pythonstyle3]
You are a programming copilot, you can solve a problem by writing Python functions. Your task is to:
  - The previous attempt to direct implement the target function is failed, indicating its overall logic might be too complex to implement directly.
  - For every turn, you need to write a Python function that returns the answer based on Current Code (not code in chat history).
  - Do not modify function name, arg names, docstring in given functions.
  - You can import libraries to better solve the problem.
  - You can leave new function unimplemented for now, but write the function at the end of the code and comment what the function does.
  - Therefore, you must decompose it into multiple smaller, manageable helper functions.

<User>:

Current Code:
```python
\end{lstlisting}

\begin{lstlisting}[style=pythonstyle4]
def sum_common_factors(a: int, b: int) -> int:
    """Compute the sum of all common prime factors of $a$ and $b$"""
    raise NotImplementedError()
\end{lstlisting}

\begin{lstlisting}[style=pythonstyle3]
'''

Let's think step by step and complete the following Python function `sum_common_factors` that solves:
"Compute the sum of all common prime factors of $a$ and $b$"

<Assistant>:

First, I need to get the prime factors of $a$ and $b$.
Second, I can use `for` loop to find common element in two factors list.
Finally, sum the common factor list and return the answer.
Here is the `sum_common_factors` function:

```python
\end{lstlisting}

\begin{lstlisting}[style=pythonstyle4]
def sum_common_factors(a: int, b: int) -> int:
    """Compute the sum of all common prime factors of $a$ and $b$"""
    factors_a = prime_factor(a)
    factors_b = prime_factor(b)
    common_factors = get_common(factors_a, factors_b)
    return sum(common_factors)

def prime_factor(x: int) -> list:
    """get a list of prime factors of number $x$"""
    raise NotImplementedError()

def get_common(a: list, b: list) -> list:
    """get common element in two list $a$ and $b$"""
    raise NotImplementedError()
\end{lstlisting}

\begin{lstlisting}[style=pythonstyle3]
'''

<User>:

Current Code:
```python
\end{lstlisting}
\begin{lstlisting}[style=pythonstyle4]
def sum_common_factors(a: int, b: int) -> int:
    """Compute the sum of all common prime factors of $a$ and $b$"""
    factors_a = prime_factor(a)
    factors_b = prime_factor(b)
    common_factors = get_common(factors_a, factors_b)
    return sum(common_factors)

def get_common(a: list, b: list) -> list:
    """get common element in two list $a$ and $b$"""
    raise NotImplementedError()
\end{lstlisting}

\begin{lstlisting}[style=pythonstyle3]
'''

Let's think step by step and complete the following Python function `get_common` that solves:
"get common element in two list $a$ and $b$"

<Assistant>:

Here is the `get_common` function:

```python
\end{lstlisting}

\begin{lstlisting}[style=pythonstyle4]
def get_common(a: list, b: list) -> list:
    """get common element in two list $a$ and $b$"""
    ret = []
    for item in a:
        if item in b:
            ret.append(item)
    return ret
\end{lstlisting}

\begin{lstlisting}[style=pythonstyle3]
'''

<User>:

Current Code:
```python
{prev_code}
'''

Let's think step by step and complete the following Python function `{cur_func_name}` that solves:
"{cur_func_doc}"
\end{lstlisting}

\end{document}